\theoremstyle{plain}
\newtheorem{lemma}{Lemma}
\newtheorem{definition}{Definition}
\newtheorem{assumption}{Assumption}
\newtheorem{property}{Property}
\newtheorem{remark}{Remark}
\newtheorem{example}{Example}
\newtheorem{problem}{Problem}
\newtheorem*{problem*}{Problem}
\newtheorem*{theorem*}{Theorem}
\newtheorem{assumption*}{Assumption}
\declaretheorem[name=Theorem]{thm}
\newcommand{\bmb}[1]{\bar{\myvar{#1}}} 
\newcommand{\bmh}[1]{\hat{\myvar{#1}}} 
\newcommand{\bmdh}[1]{\dot{\hat{\myvar{#1}}}}
\newcommand{\hdot}[1]{\dot{\hat{#1}}}
\newcommand{\bmdd}[1]{\ddot{\myvar{#1}}} 
\newcommand{\ubar}[1]{\underaccent{\bar}{#1}}
\newcommand{\ubscr}[1]{\ubar{\mathscr{#1}}}
\newcommand{\myvar}[1]{\bm{#1}}
\newcommand{\myvardot}[1]{\dot{\myvar{#1}}}
\newcommand{\myset}[1]{\mathscr{#1}}
\newcommand{\robscalar}[1]{\hat{#1}}
\newcommand{\robscalardot}[1]{\dot{\hat{#1}}}
\newcommand{\robvar}[1]{\hat{\myvar{#1}}}
\newcommand{\robmat}[1]{\hat{#1}}
\newcommand{\robset}[1]{\hat{\mathscr{#1}}}
\newcommand{\robubset}[1]{\hat{\ubscr{#1}}}
\newcommand{\apvar}[1]{\bmh{#1}}
\newcommand{\apvardot}[1]{\bmdh{#1}}
\newcommand{\apmat}[1]{\hat{#1}}
\newcommand{\apmatdot}[1]{\dot{\hat{#1}}}
\newcommand{\apscalar}[1]{\hat{#1}}
\newcommand{\apscalardot}[1]{\dot{\hat{#1}}}
\newcommand{\myvarnorm}[1]{ \left\lVert \myvar{#1} \right\rVert}
\newcommand{\myvarnormex}[2]{ \left\lVert \myvar{#1} #2 \right\rVert}
\newcommand{\mysetnorm}[2]{ \myvarnorm{#1}_{\myset{#2}}}
\newcommand{\bracketmat}[2]{ \left[ \begin{array}{#1} #2 \end{array} \right] }
\title{\LARGE \bf
Control Barrier Functions for Mechanical Systems: Theory and Application to Robotic Grasping
}
\author{Wenceslao Shaw Cortez, Denny Oetomo, Chris Manzie, and Peter Choong
\thanks{W. Shaw Cortez, D. Oetomo, and C. Manzie are with the MIDAS Laboratory in the School of Electrical, Mechanical, and Infrastructure Engineering, 
	University of Melbourne, 3010, Australia,
        {\tt\small shaww@student.unimelb.edu.au, doetomo@unimelb.edu.au, manziec@unimelb.edu.au}.}
 \thanks{ P. Choong is with the Department of Surgery, University of Melbourne, St. Vincent's Hospital, 3065,
 		Australia,
 		{\tt\small  pchoong@unimelb.edu.au}.}
}
\begin{document}

\maketitle
\thispagestyle{plain}
\pagestyle{plain}

\begin{abstract}

Control barrier functions have been demonstrated to be a useful method of ensuring constraint satisfaction for a wide class of controllers, however existing results are mostly restricted to continuous time systems of relative degree one. Mechanical systems, including robots, are typically second-order systems in which the control occurs at the force/torque level. These systems have velocity \textit{and} position constraints (i.e. relative degree two) that are vital for safety and/or task execution. Additionally, mechanical systems are typically controlled digitally as sampled-data systems. The contribution of this work is two-fold. First, is the development of novel, robust control barrier functions that ensure constraint satisfaction for relative degree two, sampled-data systems in the presence of model uncertainty. Second, is the application of the proposed method to the challenging problem of robotic grasping in which a robotic hand must ensure an object remains inside the grasp while manipulating  it to a desired reference trajectory. A grasp constraint satisfying controller is proposed that can admit existing nominal manipulation controllers from the literature, while simultaneously ensuring no slip, no over-extension (e.g. singular configurations), and no rolling off of the fingertips. Simulation and experimental results validate the proposed control for the robotic hand application.
%

\end{abstract}

\section{Introduction}

Mechanical systems, such as robots, comprise many of the engineered systems used in the physical world. Their dynamics are conventionally defined as second-order, nonlinear affine systems, with generalized states associated with position, velocity, and acceleration. The control of these systems is usually applied as a force/torque at the acceleration level, and most, if not all, mechanical systems must satisfy workspace constraints. Examples include collision avoidance of autonomous vehicles in traffic \cite{Funke2017}, a robotic manipulator avoiding obstacles in the environment \cite{Rauscher2016}, and of particular note here is a robotic hand manipulating an object \cite{ShawCortez2018b}. Workspace constraints are position dependent, which due to the second order dynamics, makes them relative degree two. Additionally, these mechanical systems are controlled digitally. That is, at each sampling period, sensors collect measurements of the system and a control input is implemented constantly over the sampling time. Thus mechanical systems can be equivalently defined as sampled-data systems of relative degree two.

An existing approach for formally handling constraints is via control barrier functions. Control barrier functions can be classified as reciprocal control barrier functions and zeroing control barrier functions \cite{Ames2017}. The former was developed with applications towards bipedal walking \cite{Hsu2015}, systems evolving on manifolds \cite{Wu2015a}, and control of constrained robots \cite{Rauscher2016}. However the latter, zeroing control barrier functions, have been shown to not only be more practical for implementation, but also robust to model uncertainties \cite{Ames2017, Xu2015a}. 

Zeroing control barrier functions are attractive due to their robustness qualities, however existing zeroing control barrier functions are mostly restricted to continuous-time, relative degree one systems. The implementation of those methods is conventionally done by formulating the control input as a quadratic program, and requires local Lipschitz continuity of the ensuing control input \cite{Ames2017}. That implementation requires restrictive properties, namely linear independent constraint qualification and complimentary slackness, to ensure local Lipschitz continuity of the quadratic program \cite{Morris2015, Robinson1982}. Those requirements are difficult to guarantee in practice when multiple constraints must be considered for mechanical systems. Furthermore the continuous-time formulation does not allow for robustness to sampling effects that arise during implementation on mechanical systems. Only \cite{Ghaffari2018} has addressed (reciprocal) control barrier function for sampled data systems, however that approach was restricted to double integrator systems, which does not apply to the mechanical systems considered here, nor does it consider robustness to perturbations.

An additional concern is that mechanical systems have a physical relationship between position, velocity, and actuation effort, which is not considered in existing methods for relative degree two systems. This relationship is apparent as the position of a mechanical system approaches the constraint boundary. If there is no consideration of the position/velocity relationship, then a large velocity may be permitted near the constraint boundary. Due to the inertia of the system, large control effort is then required to keep the system inside the constraint set. This large control effort may lead to actuator saturation, which compromises the systems ability of ensuring constraint satisfaction. This issue becomes exacerbated when considering external perturbations on the system that may arise from model uncertainty commonly found in practice. Thus it is important to systematically consider velocity profiles in the zeroing control barrier function formulation, which has not yet been addressed in existing work \cite{Ames2017, Xu2015a}.

%

The first contribution of this paper is the development of a novel zeroing control barrier function for mechanical systems. The proposed method formally ensures constraint satisfaction for relative degree two systems and is robust to sampling effects and perturbations. The proposed method also allows for the designer to tune the velocity bounds to reduce excessive actuator effort near the constraint boundary and help comply with actuator limitations of the mechanical system. A constraint satisfying controller is then defined to stay minimally close to a nominal control, but prioritize constraint satisfaction. The second contribution is the application of the proposed technique to a non-trivial problem: robotic grasping.

Robotic grasping consists of a robotic hand manipulating an object to a desired reference pose trajectory. This task is commonly referred to as in-hand manipulation. In-hand manipulation consists of moving an object to track a desired trajectory, while simultaneously ensuring the object remains within the grasp. For successful in-hand manipulation, it is paramount to guarantee that the object remains in the grasp during the manipulation motion. 
%

A failed grasp can result from slipping, joint over-extension, and excessive rolling. Slipping is an obvious grasping concern, which has been  extensively addressed in the literature \cite{ShawCortez2018b, Caldas2015, Kerr1986}. Joint over-extension relates to joints exceeding feasible joint angles (e.g joint workspace and singular hand configurations), which inhibits the robotic hand from applying necessary contact forces on the object \cite{Murray1994}. Excessive rolling is when the contact points roll off of the fingertip surface. In-hand manipulation inherently relies on rolling motion for object manipulation \cite{Montana1988}. However excessive rolling motion may cause the contact points to leave the fingertip surface, resulting in loss of contact with the object. Thus for successful manipulation, the object must not slip, the joints must remain inside a feasible workspace, and the contact points must remain in the fingertip workspace. These conditions are referred to as the grasp constraints.

To date, there exist an abundance of object manipulation controllers for robotic hands. Early work developed manipulation controllers based on linearization or feedback linearization for an exact object model \cite{Jen1996, Cole1989}. In \cite{Cheah1998} a more robust manipulation controller was developed to handle gravity, uncertain contact locations/kinematics. Later work developed passivity-based controllers \cite{Song2012}, impedance-based controllers \cite{Wimbock2012}, and visual/tactile-based controllers \cite{Jara2014}. Other bio-inspired methods have also been developed by extracting human information to perform manipulation on robotic hands \cite{Gabiccini2013,Prattichizzo2013,Farnioli2013,Colasanto2013,Salvietti2013,Gioioso2013a}. A more extensive review of object manipulation control methods can be found in \cite{Ozawa2017}. However, those existing techniques are only valid if the grasp constraints are satisfied.

There exist few methods that address grasp constraint satisfaction. In grasp force optimization, an optimization problem is solved in-the-loop in which the constraints ensure slip is prevented. Early work in grasp force optimization primarily focused on computational speed due to limited computational capabilities of the time \cite{Kerr1986, Nahon1992, Buss1996, Han2000a, Helmke2002, Boyd2007}. Later work addressed robustness properties of grasp force optimization \cite{Fungtammasan2012}, and considered the dynamics of the hand-object system \cite{Caldas2015, Fan2017, ShawCortez2018b}. However those methods only address a subset of the grasp constraint satisfaction problem, that of no slip, and still required unfounded assumptions that singular configurations were avoided and excessive rolling did not occur.

Existing methods of addressing all grasping constraints are via motion planning approaches. Motion planning approaches typically assume the exact knowledge of the hand-object model and assume the system is quasi-static to search for feasible manipulation trajectories \cite{Cherif1999, Han2000, Hertkorn2013}. In \cite{CorralesRamon2013}, the knowledge of the object is relaxed by requiring a mesh of the object surface, but still assumes a quasi-static nature of the system. Those methods rely on quasi-static assumptions that generally do not hold in a dynamic manipulation setting. Furthermore those approaches are subject to large computational resources, which may not be conducive to real-time applications \cite{Hertkorn2013}. 

Furthermore, existing methods of addressing grasping constraints are not robust to model uncertainties and/or unknown disturbances that may act on the system. The literature contains numerous controllers that aim to extend manipulation capabilities outside of a laboratory setting by restricting the required sensors to on-board sensing modalities \cite{ShawCortez2018b, Ozawa2017}. A common theme in related work is referred to as ``blind grasping" or similarly ``tactile-based blind grasping" where the robotic hand only has access to sensors that can be physically integrated into the hand (i.e. joint angle sensors and tactile sensors). Those approaches still depend on satisfaction of the grasp constraint, but with the added restriction to on-board sensors and no a priori knowledge of the object model. This motivates the need for a robust, active method of ensuring grasp constraint satisfaction that does not require exact knowledge of the object model. 

The second contribution of this paper is a novel grasp controller to actively ensure grasp constraint satisfaction. The proposed control is designed to be implemented alongside existing manipulation controllers found in the literature. The idea here is to support the many existing methods from the literature such that in the case when no constraint violation occurs, the proposed control outputs the original manipulation controller from the literature. However, should constraint violation be imminent, the proposed control deviates from the manipulation controller to prioritize constraint satisfaction (i.e. no slip, no over-extension, no excessive rolling). The proposed control is implemented in simulation and hardware to demonstrate the efficacy of the proposed approach. Note this paper extends the contribution from \cite{ShawCortez2018c} with application to sampled-data systems, robustness to model uncertainties, associated formal proofs, and implementation on hardware.

\subsection*{Notation}

Throughout this paper, an indexed vector $\myvar{v}_i \in \mathbb{R}^p$ has an associated concatenated vector $\myvar{v} \in \mathbb{R}^{pk}$, where the index $i$ is specifically used to index over the $n$ contact points in the grasp. The notation $\myvar{v}^{\mathcal{\mathcal{E}}}$ indicates that the vector $\myvar{v}$ is written with respect to a frame $\mathcal{E}$, and if there is no explicit frame defined, $\myvar{v}$ is written with respect to the inertial frame, $\mathcal{P}$. The operator $(\cdot)\times$ denotes the skew-symmetric matrix representation of the cross-product. $SO(3)$ denotes the special orthogonal group of dimension 3. The $r\times r$ identity matrix is denoted $I_{r\times r}$. The term $\myvar{e}_j \in \mathbb{R}^{1,r}$ denotes the $j$th row of $I_{r\times r}$. The Lie derivatives of a function $h(\myvar{x})$ for the system $\myvardot{x} = \myvar{f}(\myvar{x}) + g(\myvar{x}) \myvar{u}$ are denoted by $L_f h$ and $L_gh$, respectively. The intersection of sets $\mathscr{C}_j$ for $j\in [1,l]$ is denoted $\ubscr{C}$. The distance between a point $\myvar{x}$ and set $\myset{A}$ is $\mysetnorm{x}{A} := \underset{\myvar{y} \in \myset{A}}{\text{inf}}$ $\myvarnorm{x - y}$. A $\mathcal{KL}$ function as defined in \cite{Khalil2002} is denoted by $\mathcal{KL}(\cdot, \cdot)$. When discussing model uncertainty, the approximation of a variable $\myvar{v}$ is denoted with a hat, $\apvar{v}$, and the associated error is denoted by $\Delta(\myvar{v})$.

\section{Zeroing Control Barrier Functions for Sampled-Data Systems of Relative Degree Two}\label{sec:control barrier fcns}

In this section, the zeroing control barrier functions from \cite{Ames2017} are extended to relative degree two, sampled-data systems and address robustness to perturbations.

\subsection{Robust Barrier Functions for Relative Degree Two Systems}

Consider the following nonlinear affine control system in continuous time:
\begin{equation}\label{eq:nonlinear affine dynamics}
\myvardot{x} = \myvar{f}(\myvar{x}) + \myvar{g}(\myvar{x}) \myvar{u} + \myvar{d}
\end{equation}
where $\myvar{u} \in U \subseteq \mathbb{R}^m$ is the control input, $\myvar{f}, \myvar{g}$ are locally Lipschitz continuous functions of $\myvar{x} \in \mathbb{R}^p$, and $\myvar{d} \in \mathbb{R}^p$ is a bounded, locally Lipschitz disturbance. Let $\myvar{x}(t,\myvar{x}_0) \in \mathbb{R}^n$ be the solution of \eqref{eq:nonlinear affine dynamics}, which for ease of notation is denoted by $\myvar{x}$. 

The goal of constraint satisfaction is to ensure the states $\myvar{x}$ stay within a set of constraint-admissible states. Let $h_j(\myvar{x}): \mathbb{R}^p \to \mathbb{R}$ be a twice-continuously differentiable, relative degree two function for constraint $j \in [1,l]$. Robustness is addressed using margins, $\delta_j \in \mathbb{R}_{\geq 0}$ in $h_j$, which leads to the following definition of $\robscalar{h}_j$:
\begin{equation}\label{eq:robust h}
\robscalar{h}_j(\myvar{x}) = h_j(\myvar{x}) - \delta_j, \ j\in [1,l]
\end{equation}

Let the set of constraint-admissible states be:
\begin{equation}\label{eq:constraint set multiple}
\begin{split}
\robset{C}_j = \{ \myvar{x} \in \mathbb{R}^p: \robscalar{h}_j (\myvar{x}) \geq 0 \}, \  j \in [1,l] \\
\partial \robset{C}_j =  \{ \myvar{x} \in \mathbb{R}^p: \robscalar{h}_j(\myvar{x}) = 0 \}, \  j \in [1,l] \\
\text{Int}( \robset{C}_j)=  \{ \myvar{x} \in \mathbb{R}^p: \robscalar{h}_j(\myvar{x}) > 0 \}, \  j \in [1,l] \\
\end{split}
\end{equation}
For ease of notation,  $\robubset{C}$ represents the intersection of all $\robset{C}_j$ for $j \in [1,l]$. Also, $\myset{C}_j$ is used to denote $\robset{C}_j$ with $\delta_j \equiv 0$.

The following definition for extended class-$\mathcal{K}$ function is now introduced:
\begin{definition} \label{def:extended class K} \cite{Ames2017}:
A continuous function, $\alpha:(-b,a) \to (-\infty,\infty)$ for $a,b \in \mathbb{R}_{>0}$ is an \textit{extended class-$\mathcal{K}$ function} if it is strictly increasing and $\alpha(0) = 0$.
\end{definition} 
\noindent Note for clarity, the extended class-$K$ functions addressed here will be defined for $a,b = \infty$. 

Constraint satisfaction is ensured by showing that on the constraint boundary, the system states are directed into the interior or along the boundary of the constraint set \cite{Blanchini2015}. This is equivalent to guaranteeing that $\robscalardot{h}(\myvar{x}) \geq - \alpha_1(\robscalar{h}(\myvar{x}))$ for a continuously differentiable, extended class-$\mathcal{K}$ function $\alpha_1$ \cite{Ames2017}. 

The novel approach taken here is to introduce a continuously differentiable function $\robscalar{B}_j: \mathbb{R}^p \to \mathbb{R}$ defined by:
\begin{equation}\label{eq:candidate zeroing cbf multiple}
\robscalar{B}_j(\myvar{x}) = \robscalardot{h}_j(\myvar{x}) + \alpha_1 (\robscalar{h}_j(\myvar{x})) - \beta_j, \  j \in [1,l]
\end{equation} 
where $\beta_j \in \mathbb{R}_{\geq 0}$ is a robustness margin. By construction of $\robscalar{B}_j$, constraint satisfaction regarding $\robset{C}_j$ is equivalent to ensuring $\robscalar{B}_j \geq 0$ for all $t\geq 0$. Let $\robset{B}_j$ denote the set where $\robscalar{B}_j \geq 0$:
\begin{equation}\label{eq:set B for zbf}
\robset{B}_j =  \{ \myvar{x} \in \mathbb{R}^p: \robscalar{B}_j(\myvar{x}) \geq 0 \}
\end{equation}
As with $\robscalar{h}_j$ and $\robset{C}_j$, $B_j$ and $\myset{B}_j$ will be used to denote $\robscalar{B}_j$, $\robset{B}_j$ for $\beta_j \equiv 0$. To implement the control barrier functions, let $\robset{S}_{u_j}$ denote the set of constraint-admissable control inputs:
\begin{multline}\label{eq:Su for zcbf}
\robset{S}_{u_j}(\myvar{x}) = \{ \myvar{u} \in U: \\ L_f \robscalar{B}_j(\myvar{x}) + L_g \robscalar{B}_j(\myvar{x}) \myvar{u}  + \alpha_2(\robscalar{B}_j(\myvar{x})) \geq  0 \}, \ j \in [1,l]
\end{multline}

By ensuring forward invariance of $\robset{B}_j$, it follows that $\robscalar{B}_j \geq 0$ to ensure that $\robset{C}_j$ is forward invariant. The function $\robscalar{B}_j$ is considered the \textit{zeroing control barrier function}. Here, input-to-state stability properties of zeroing control barrier functions are exploited to ensure forward invariance of $\ubscr{C}$ in the presence of bounded perturbations. This is accomplished by using robustness margins $\beta_j$, $\delta_j$ for compact sets $\ubscr{B}$, $\ubscr{C}$. Forward invariance of $\ubscr{C}$ for the proposed method is formally guaranteed in the following theorem:

\begin{thm}\label{thm:ZBF robust}
Consider the controllable system \eqref{eq:nonlinear affine dynamics}. Let $\robset{C}_j$ be defined by  \eqref{eq:robust h}, \eqref{eq:constraint set multiple}, for the twice-continuously differentiable, relative degree two functions $h_j:\mathscr{D}_j \to \mathbb{R}$, defined on the open set $\myset{D}_j \supset \myset{C}_j \supset \robset{C}_j$. Suppose there exist a continuously differentiable extended class-$\mathcal{K}$ function $\alpha_1$ and extended class-$\mathcal{K}$ function $\alpha_2$ such that for $\robscalar{B}_j: \mathscr{E}_j \to \mathbb{R}$ (defined by \eqref{eq:candidate zeroing cbf multiple}) on the open set $\myset{E}_j \supset\robset{B}_j$ (defined by \eqref{eq:set B for zbf}), and $\robubset{S}_{u}$ (defined by \eqref{eq:Su for zcbf}), $\robubset{B}$ and $\robubset{C}$ are compact, and $\robubset{B} \cap \robubset{C}$, $\robubset{S}_u$ are non-empty. Then for any $\myvar{x}(0) \in \robubset{B} \bigcap \robubset{C}$, and locally Lipschitz control $\myvar{u}(\myvar{x}) \in \ubscr{S}_u(\myvar{x})$, there exist $\theta, \beta_j, \delta_j \in \mathbb{R}_{\geq 0}$, $j\in[1,l]$ such that for $\myvarnorm{d}_{\infty} \leq \theta$, $\myvar{x}$ remains in $\ubscr{B} \bigcap \ubscr{C}$ for $t \geq 0$.
\end{thm}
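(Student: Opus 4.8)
The plan is to reduce the claim to a standard comparison‑lemma (equivalently, Nagumo sub‑tangentiality) argument on the closed set $\ubscr{B}\cap\ubscr{C}=\{\myvar{x}: B_j(\myvar{x})\ge 0\text{ and }h_j(\myvar{x})\ge 0,\ j\in[1,l]\}$, with $\beta_j,\delta_j$ chosen so that the worst‑case disturbance terms are dominated on the relevant boundaries. Two observations set this up. First, since $B_j=\robscalar{B}_j+\beta_j$ and $h_j=\robscalar{h}_j+\delta_j$ differ from their robustified versions only by the nonnegative constants $\beta_j,\delta_j$, one has $\robubset{B}\subseteq\ubscr{B}$ and $\robubset{C}\subseteq\ubscr{C}$, so the prescribed initial condition already lies in the set to be rendered forward invariant. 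Second — a key structural point — $B_j$ retains the position margin $\delta_j$ (only $\beta_j$ is zeroed), i.e. $B_j=L_f h_j+\alpha_1(h_j-\delta_j)$; this is what supplies a strictly positive restoring term when $h_j$ reaches $0$.

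I would first record the derivatives along a solution $\myvar{x}(\cdot)$ of \eqref{eq:nonlinear affine dynamics}. Relative degree two gives $L_g h_j\equiv 0$, so $\dot h_j=L_f h_j+\nabla h_j\cdot\myvar{d}$, while $\dot B_j=L_f B_j+L_g B_j\myvar{u}+\nabla B_j\cdot\myvar{d}$ with $L_g B_j=L_g L_f h_j\not\equiv 0$ (the $\alpha_1$ term contributes $\alpha_1'(h_j-\delta_j)L_g h_j=0$), so membership of $\myvar{u}$ in the constraint‑admissible set \eqref{eq:Su for zcbf} is a genuine restriction and yields $L_f B_j+L_g B_j\myvar{u}\ge-\alpha_2(\robscalar{B}_j)=-\alpha_2(B_j-\beta_j)$. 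Using the compactness hypotheses on $\robubset{B},\robubset{C}$ together with continuity of $\nabla h_j$ and of $\nabla B_j$ (the latter continuous since $\robscalar{B}_j$ is assumed $C^1$), I fix a compact operating region $\myset{K}$, contained in the open sets $\myset{D}_j,\myset{E}_j$ and containing $\ubscr{B}\cap\ubscr{C}$, and set $\nu_j=\max_{\myset{K}}\|\nabla h_j\|$, $\mu_j=\max_{\myset{K}}\|\nabla B_j\|$. Then, while $\myvar{x}\in\myset{K}$ and $\|\myvar{d}\|_\infty\le\theta$,
\[
\dot B_j\ \ge\ -\alpha_2(B_j-\beta_j)-\mu_j\theta,\qquad\text{and whenever }B_j\ge 0,\quad \dot h_j\ \ge\ -\alpha_1(h_j-\delta_j)-\nu_j\theta .
\]

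Now I would choose the margins. Pick any $\beta_j,\delta_j>0$ (small enough to be compatible with the stated compactness/non‑emptiness hypotheses), and then $\theta\ge 0$ small enough that, for every $j$, $\mu_j\theta\le-\alpha_2(-\beta_j)$ and $\nu_j\theta\le-\alpha_1(-\delta_j)$; both right‑hand sides are strictly positive because $\alpha_1,\alpha_2$ are strictly increasing with $\alpha_i(0)=0$, so these are just finitely many upper bounds on $\theta$. With this choice the scalar comparison systems $\dot y=-\alpha_2(y-\beta_j)-\mu_j\theta$ and $\dot z=-\alpha_1(z-\delta_j)-\nu_j\theta$ have equilibria $y^\ast=\beta_j+\alpha_2^{-1}(-\mu_j\theta)\ge 0$ and $z^\ast=\delta_j+\alpha_1^{-1}(-\nu_j\theta)\ge 0$; since $B_j(\myvar{x}(0))\ge\beta_j\ge y^\ast$ and $h_j(\myvar{x}(0))\ge\delta_j\ge z^\ast$, the comparison lemma gives $B_j(\myvar{x}(t))\ge y(t)\ge y^\ast\ge 0$ (so the hypothesis "$B_j\ge 0$" in the second inequality holds for all $t$) and then $h_j(\myvar{x}(t))\ge z(t)\ge z^\ast\ge 0$ for all $t\ge 0$. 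Equivalently, one verifies sub‑tangentiality at $\partial(\ubscr{B}\cap\ubscr{C})$ — $\dot B_j\ge-\alpha_2(-\beta_j)-\mu_j\theta\ge 0$ on $\{B_j=0\}$ and $\dot h_j\ge-\alpha_1(-\delta_j)-\nu_j\theta\ge 0$ on $\{h_j=0\}$ — and invokes Nagumo's theorem. The argument is cascaded: $B_j\ge 0$ is preserved using only the input constraint \eqref{eq:Su for zcbf}, and $h_j\ge 0$ is then preserved using $B_j\ge 0$, so there is no circularity; and since $\ubscr{B}\cap\ubscr{C}\subseteq\myset{K}$ is compact the solution is complete, which finishes the proof.

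I expect the main obstacle to be the bookkeeping around compactness and quantifier order rather than the Lyapunov/comparison mechanics. The bounds $\nu_j,\mu_j$ are taken over a region $\myset{K}$ that must both contain $\ubscr{B}\cap\ubscr{C}$ and be compact and inside $\myset{D}_j,\myset{E}_j$; producing such a $\myset{K}$ and confining the trajectory to it a priori (a short maximal‑interval bootstrap) is where the compactness hypotheses on $\robubset{B},\robubset{C}$ are genuinely used, and the margins must be fixed before $\theta$ even though $\myvar{u}$ is drawn from a set defined in terms of them. One must also keep the input set built from the margined $\robscalar{B}_j$, so that the positive buffer $-\alpha_2(-\beta_j)$ appears at $\{B_j=0\}$; with a zero‑margin input set the same reasoning would render only a slightly inflated set invariant rather than $\ubscr{B}\cap\ubscr{C}$ itself. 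The remaining ingredients — the Lie‑derivative identities, the comparison lemma, and Nagumo's theorem — are routine.
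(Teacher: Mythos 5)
Your proof is correct, and it takes a genuinely different route from the paper's. The paper proves the theorem in two layers: it first builds piecewise Lyapunov functions $V_{\robset{B}_j}=\max(0,-\robscalar{B}_j)$ and $V_{\robset{C}_j}=\max(0,-\robscalar{h}_j)$ and invokes set-stability theory (Lin--Sontag--Wang) to get asymptotic stability of the compact sets $\robubset{B}$ and $\robubset{C}$ for $\myvar{d}\equiv 0$; it then appeals to the input-to-state-stability property of zeroing CBFs (Proposition 5 of Xu et al.) to conclude that the $\gamma(\myvarnorm{d}_\infty)$-inflated sets are asymptotically stable with a $\mathcal{KL}$ estimate on the distance, chooses $\beta_j=\gamma(\myvarnorm{d}_\infty)$ so that the inflated set coincides with $\myset{B}_j$, and reads off forward invariance from the fact that the initial distance is zero. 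You instead work directly with the scalar differential inequalities $\dot B_j\ge-\alpha_2(B_j-\beta_j)-\mu_j\theta$ and (when $B_j\ge 0$) $\dot h_j\ge-\alpha_1(h_j-\delta_j)-\nu_j\theta$, locate the nonnegative equilibria $y^\ast,z^\ast$ of the comparison systems, and close the argument by Nagumo/comparison in a cascade. Both proofs share the same cascaded skeleton ($B_j\ge 0$ is secured first using only the input constraint, then $h_j\ge 0$ using $B_j\ge 0$), but yours is more elementary and constructive: it avoids the converse-Lyapunov and ISS citations entirely and yields an explicit admissible disturbance bound $\theta\le\min_j\{-\alpha_2(-\beta_j)/\mu_j,\,-\alpha_1(-\delta_j)/\nu_j\}$ in terms of gradient bounds over a compact operating region, whereas the paper's existence claim for $\theta$ is non-constructive. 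One caveat applies equally to both arguments and is really a feature of the theorem statement: only the margined sets $\robubset{B},\robubset{C}$ are assumed compact, while the invariance claim concerns the larger closed sets $\ubscr{B},\ubscr{C}$, so the compact region $\myset{K}$ over which you take $\mu_j,\nu_j$ (and over which the paper's $\mathcal{KL}$ estimate is meaningful) needs $\ubscr{B}\cap\ubscr{C}$ to be bounded; you flag this bootstrap explicitly, which is appropriate. Your observation that the admissible-input set must be built from the margined $\robscalar{B}_j$ so that the buffer $-\alpha_2(-\beta_j)>0$ survives at $\{B_j=0\}$ is exactly the mechanism the paper encodes via $\beta_j=\gamma(\myvarnorm{d}_\infty)$.
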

\begin{proof}
First, asymptotic stability of $\robubset{B}$ is considered for $\myvar{d} \equiv 0$. Let $V_{\robubset{B}}: \mathbb{R}^p  \to \mathbb{R}_{\geq 0}$ be the Lyapunov function defined by $V_{\robubset{B}} = \sum_j V_{\robset{B}_j}$ on the open set $\ubscr{E}$ where $V_{\robset{B}_j}$ is defined by:
\begin{equation}
V_{\robset{B}_j}(\myvar{x}) = \begin{cases}
0, & \text{if} \ \myvar{x} \in \robset{B}_j \\
-\robscalar{B}_j(\myvar{x}),  & \text{if} \ \myvar{x} \in \myset{E}_j \setminus \robset{B}_j
 \end{cases}
\end{equation} 
To show negative definiteness of $\dot{V}_{\robubset{B}}$, note that for $\myvar{x} \in  \myset{E}_j \setminus \robset{B}_j $,  $\robscalar{B}_j(\myvar{x})  < 0$. For $\myvar{u} \in \robset{S}_{u_j}$, differentiation of $V_{\robset{B}_j}$ for $\myvar{x} \in \myset{E}_j \setminus \robset{B}_j$ results in $\dot{V}_{\robset{B}_j} = -\robscalardot{B}_j \leq \alpha_2(\robscalar{B}_j) = \alpha_2(-V_{\robset{B}_j}) < 0$ for each $j \in [1,l]$. Thus $\dot{V}_{\robubset{B}} < 0$ holds by definitino of $V_{\robubset{B}}$ and asymptotic stability of the compact set $\robubset{B}$ follows from Theorem 2.8 of \cite{Lin1996}. Smoothness of the Lyapunov function is addressed in Proposition 4.2 of \cite{Lin1996}. 

Next, asymptotic stability of $\robubset{C}$ is addressed. Consider the following Lyapunov function, $V_{\robubset{C}}: \mathbb{R}^p \to \mathbb{R}_{\geq 0}$, defined by $V_{\robubset{C}} = \sum_j V_{\robset{C}_j}$ on the open set $\ubscr{D}$ where $V_{\robset{C}_j}$ is defined by:
\begin{equation}
V_{\robset{C}_j} (\myvar{x}) = \begin{cases}
0, & \text{if} \ \myvar{x} \in \robset{C}_j \\
-\robscalar{h}_j(\myvar{x}), & \text{if} \ \myvar{x} \in \myset{D}_j  \setminus   \robset{C}_j
 \end{cases}
\end{equation} 
From $\myvar{x} \in \robset{B}_j$, it follows that $\robscalar{B}_j \geq 0$ holds. Thus $V_{\robset{C}_j}(\myvar{x}) = 0$ for $\myvar{x} \in \robset{C}_j$, $V_{\robset{C}_j}(\myvar{x}) >0$ for $\myvar{x} \in  \myset{D}_j\setminus \robset{C}_j  $. Differentiation of $V_{\robset{C}_j}$ for $\myvar{x} \in \myset{D}_j \setminus \robset{C}_j$ results in $\dot{V}_{\robset{C}_j} = -\robscalardot{h}_j \leq \alpha_1(\robscalar{h}_j) = \alpha_1(-V_{\robset{C}_j}) < 0$. Similarly with $\robubset{B}$, asymptotic stability of $\robubset{C}$ follows.  

Now consider $\myvar{d} \neq 0$. It follows from Proposition 5 of \cite{Xu2015a} that for a class-$\mathcal{K}$ function $\gamma$, the set:
 \begin{equation}
 \myset{B}_{\gamma_j} = \{ \myvar{x} \in \mathbb{R}^p: \apscalar{B}_j(\myvar{x}) \geq -\gamma(\myvarnorm{d}_{\infty}) \}
 \end{equation}
is asymptotically stable for $ \myvarnorm{d}_{\infty} \leq \theta$ such that  $\myvarnorm{x}_{\mathscr{B}_\gamma} \leq \mathcal{KL}(\myvarnormex{x}{(0)}_{\myset{B}_\gamma}, t)$  for all $t \geq 0$. Choose $\beta_j = \gamma( \myvarnorm{d}_{\infty})$. By \eqref{eq:candidate zeroing cbf multiple}, it follows that $B_j = \robscalar{B}_j + \beta_j$ and consequently $\myset{B}_j = \myset{B}_{\gamma_j}$. Thus $\myset{B}_j$ is asymptotically stable such that $\myvarnorm{x}_{\myset{B}_j} \leq \mathcal{KL}( \myvarnormex{x}{(0)}_{\myset{B}_j} , t)$. By the condition that $\myvar{x}(0) \in \robset{B}_j \subset \myset{B}_j$, it follows that $ \myvarnorm{x}_{\myset{B}_j} = 0$ for all $t \geq 0$, and thus  $\myvar{x}$ remains in $\myset{B}_j$ for $t \geq 0$.  
 
Forward invariance of $\myset{B}_j$ implies that $\myset{C}_j$ is asymptotically stable. Thus by the same argument, it follows that there exists a $\delta_j $ such that $\myvar{x}$ remains in $\myset{C}_j$ for $t \geq 0$. Since $\myvar{x}$ remains in $\myset{B}_j$ and $\myset{C}_j$ concurrently, $\myvar{x}$ must remain in $\myset{B}_j \bigcap \myset{C}_j$ for $t \geq 0$. Repeated application for $j \in [1,l]$ completes the proof.
\end{proof}

Theorem \ref{thm:ZBF robust} ensures constraint satisfaction via the proposed zeroing control barrier functions in the presence of disturbances. The development of the proposed zeroing control barrier functions is also advantageous for implementation on mechanical systems of relative degree two. The following example highlights this advantage by demonstrating how the propose method naturally restricts large velocities from occurring at the constraint boundary:

\begin{example}\label{ex:velocity bounds}
One advantage of the zeroing control barrier functions presented here over existing methods \cite{Ames2017} is that it naturally constrains the velocity of the relative-degree two system for desired behavior near the constraint boundary. This is particularly advantageous for mechanical systems with workspace constraints. Consider the following system:
\begin{equation}
\dot{x}_1 = x_2
\end{equation}
\begin{equation}
\dot{x}_2 = f(\myvar{x}) +g(\myvar{x}) \myvar{u}
\end{equation}
where $\myvar{x} = (x_1, x_2) \in \mathbb{R}^2 $, $\myvar{u} \in \mathbb{R}^2$. The system is constrained by: $1 \leq x_1 \leq 4$.

Let $h_{\text{min}}(x) = x - 1$, $h_{\text{max}} = 4 - x$. From \eqref{eq:candidate zeroing cbf multiple}, the resulting control barrier functions are $B_{\text{min}} = x_2 + \alpha_1(x-1)$ and $B_{\text{max}} = -x_2 + \alpha_1(4-x)$ for a given extended class-$\mathcal{K}$ function, $\alpha_1$. The zeroing control barrier functions naturally bound the velocity, $x_2$, with respect to the extended class-$\mathcal{K}$ function $\alpha_1$: 
\begin{equation}
\alpha_1(x_1-1) \leq x_2 \leq \alpha_1(4-x_1) 
\end{equation}

The bounds on $x_2$ are shown in Figure \ref{fig:velocity bounds} for various choices of $\alpha_1$. Figure \ref{fig:velocity bounds} shows that without additional structure imposed on the proposed control barrier functions, the velocity bounds prevent large velocities at the boundary that would require large control effort to ensure constraint satisfaction. Furthermore, the designer has the ability to tune these velocity bounds by an appropriate choice of $\alpha_1$. The gray regions in Figure \ref{fig:velocity bounds} depict the set of constraint-admissible states $\myvar{x}$.

\begin{figure}[hbtp]
\centering
	\subcaptionbox{$\alpha_1(h) = h$ \label{fig:velbound_linear} }
		{\includegraphics[scale=.2]{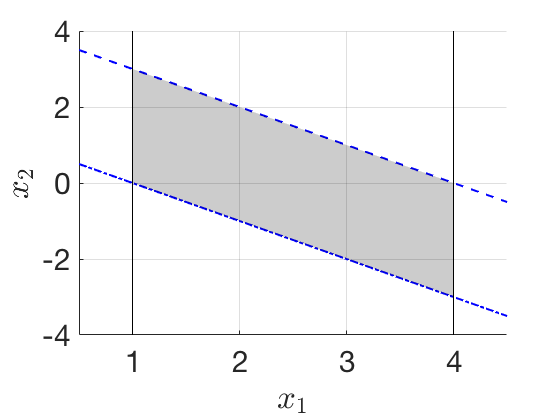}}
	\subcaptionbox{$\alpha_1(h) = 0.15 h^3$  \label{fig:velbound_cubic} }
		{\includegraphics[scale=.2]{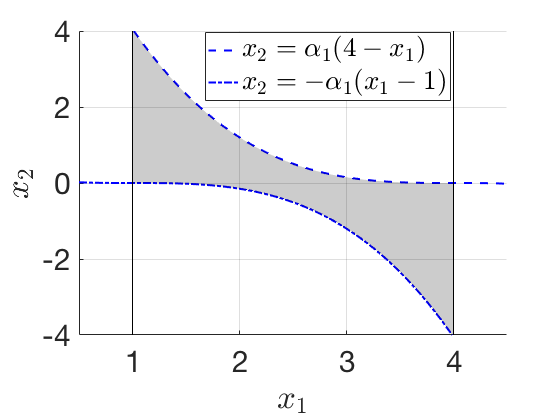}}
	\subcaptionbox{$\alpha_1(h) = 2 \tan^{-1}(h)$ \label{fig:velbound_atan} }
		{\includegraphics[scale=.2]{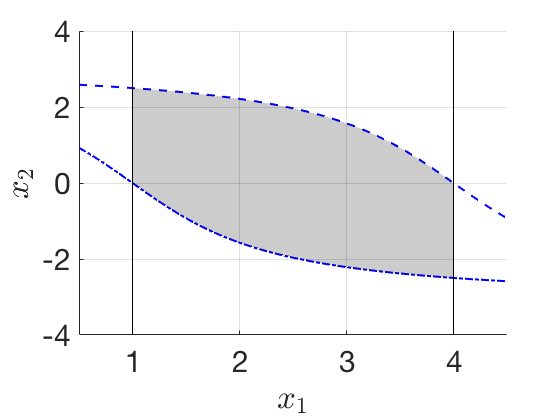}}
	
	\caption{Induced velocity bounds for various choices of $\alpha_1$. Note the shaded regions depict the constraint-admissible set for $x_1, x_2$.} \label{fig:velocity bounds}
\end{figure}

\end{example}
%

\begin{remark}
It is worth mentioning that the proposed zeroing control barrier function method is extendable to larger relative degrees by repeated applications of \eqref{eq:candidate zeroing cbf multiple}. For example, for a relative degree 3 system, a new function $\apscalar{E}_j = \apscalardot{B}_j + \alpha_3(\apscalar{B}_j) - \eta_j$, with $\eta_j \in \mathbb{R}_{\geq 0}$, would be defined as the zeroing control barrier function and the same results would follow.
\end{remark}

\subsection{Control Barrier Functions for Sampled-Data Systems}

Theorem \ref{thm:ZBF robust} ensures constraint satisfaction for some locally Lipschitz continuous $\myvar{u} \in \robubset{S}_u$. However insofar there is no discussion of how such a $\myvar{u}$ is to be constructed. In related work \cite{Ames2017, Rauscher2016}, the control $\myvar{u}$ is defined as the solution to a quadratic program, which takes the following form for the zeroing control barrier functions presented here:
\begin{align} \label{eq:consat qp robust ct}
\begin{split}
\myvar{u}^* \hspace{0.1cm} = \hspace{0.1cm} & \underset{\myvar{u} \in \mathbb{R}^m}{\text{argmin}}
\hspace{.1cm} \myvar{u}^T \myvar{u}  - 2 \myvar{u}_{\text{nom}}^T \myvar{u}  \\
& \text{s.t.}  \hspace{0.6cm} \apmat{A}(\myvar{x}) \myvar{u} \geq \apvar{b}(\myvar{x})  \\
& \hspace{0.7cm}  \myvar{u}_{\text{min}} \leq \myvar{u} \leq \myvar{u}_{\text{max}}
\end{split}
\end{align}
where $\myvar{u}_{\text{nom}} \in \mathbb{R}^m$ is a nominal control and
\begin{equation}
\robmat{A}(\myvar{x}) = \bracketmat{ccc}{L_g \robscalar{B}_1(\myvar{x})^T, & ... ,& L_g \robscalar{B}_l(\myvar{x})^T}^T
\end{equation}
\begin{equation}
\robvar{b}(\myvar{x}) = \bracketmat{c}{-L_f \robscalar{B}_1(\myvar{x}) - \alpha_2(\robscalar{B}_1(\myvar{x})) \\ ... \\ -L_f \robscalar{B}_l(\myvar{x}) - \alpha_2( \robscalar{B}_l(\myvar{x}))}
\end{equation} 

The idea behind the quadratic program formulation is to simplify the control design by first designing a stabilizing, nominal control for the system without consideration of constraint satisfaction. Then the quadratic program \eqref{eq:consat qp robust ct} ensures constraint satisfaction as the nominal control stabilizes the system to a desired point. This implementation prioritizes constraint satisfaction over the stability/performance guarantees associated with the nominal controller. It is important to emphasize that existing control barrier function technqiues require local Lipschitz continuity of $\myvar{u}^*$ \cite{Ames2017, Rauscher2016}. However the quadratic program formulation alone is not sufficient to ensure local Lipschitz continuity as discussed in \cite{Robinson1982}. To ensure local Lipschitzness of the quadratic program \eqref{eq:consat qp robust ct}, the following properties must hold \cite{Fiacco1976, Morris2015}:
\begin{property}\label{prop:linear independence}
(Linear Independence Constraint Qualification) The active constraints of the quadratic program \eqref{eq:consat qp robust ct} have full row rank.
\end{property}

\begin{property}\label{prop:complimentary slackness}
(Strict Complimentary Slackness) Let $\lambda^* \in \mathbb{R}^m$ denote the Lagrange multiplier associated with $\myvar{u}^*$. Strict complimentary slackness is satisfied if there does not exist any $j$ such that both $\lambda_j^* = 0$ and $A_j \myvar{u}_j^* = 0$. ($A_j$ refers to the $j$th row of $A$).
\end{property}
Note that Properties \ref{prop:linear independence} and \ref{prop:complimentary slackness} can be overly restrictive and difficult to ensure. Linear independence in particular can be troublesome when there are more constraints than decision variables in the optimization problem leading to potential redundant, active constraints. This occurs in robotic grasping, as will be discussed later, in which slip and workspace constraints outnumber the control input of the system.

The quadratic program formulation from \eqref{eq:consat qp robust ct} is valuable for mechanical systems, particularly robotic systems, in which constraint satisfaction is paramount for safety and successful execution of a task. However \eqref{eq:consat qp robust ct} is not directly implementable for mechanical systems. Mechanical systems are typically controlled digitally, in which sensors are used to take measurements, the quadratic program \eqref{eq:consat qp robust ct} is solved, and the computed control is implemented over a sampling period. This sample-hold operation is a common characteristic in most mechanical systems, yet is neglected in existing control barrier function methods.

Here, sampling effects are considered to extend the proposed zeroing control barrier function method for implementation on mechanical systems. Consider the sampled-data version of \eqref{eq:nonlinear affine dynamics}, where a zero-order hold is placed on $\myvar{u}$:
\begin{equation}\label{eq:sampled data system}
\myvardot{x} = \myvar{f}(\myvar{x}) + \myvar{g}(\myvar{x}) \myvar{u}_k + \myvar{d}
\end{equation}
and $\myvar{u}_k \in \mathbb{R}^m$ is a piece-wise constant control with sampling time of $T \in \mathbb{R}_{>0}$. The sampled-data system takes measurements of $\myvar{x}_k := \myvar{x}(kT)$, which are constant in each sampling period. Uniqueness and existence of the solution $\myvar{x}(t, \myvar{x}(0))$ for \eqref{eq:sampled data system} is ensured by Caratheodory's Theorem for a time interval $t \in [0, NT)$, $N \in \mathbb{Z}_{>0}$ \cite{Grune2011}.

The extension of zeroing control barrier functions to sampled-data systems is as follows. First, inter-sampling behavior is addressed by incorporating a margin $\nu(T)$ in the zeroing control barrier function condition \eqref{eq:Su for zcbf}. The margin $\nu(T)$ is an extended class-$\mathcal{K}$ function, which acts to negate inter-sampling effects. Let the set of constraint-admissible controls $\myvar{u}_k$ for sampled-data systems be defined by:
\begin{multline}\label{eq:Su for zcbf sampled-data}
\robset{S}_{u_{k_j}}(\myvar{x}) = \{ \myvar{u}_k \in U: k \in [0,N]: \\ L_f \robscalar{B}_j(\myvar{x}_k) + L_g \robscalar{B}_j(\myvar{x}_k) \myvar{u}_k  + \alpha_2(\robscalar{B}_j(\myvar{x}_k)) \geq  \nu(T)\}, j \in [1,l]
\end{multline}

Following Nagumo's Theorem \cite{Blanchini2015}, the use of $\nu(T)$ is to ensure $\robscalardot{B}_j \geq - \alpha(\robscalar{B}_j)$, which ensures asymptotic stability of $\robset{B}_j$ (see Theorem \ref{thm:ZBF robust}) for forward invariance of $\myset{B}_j$. The following theorem ensures forward invariance of $\ubscr{B} \cap \ubscr{C}$ in the presence of sampling and external perturbations:

\begin{thm}\label{thm:sampled data zcbf}
Consider the controllable system \eqref{eq:sampled data system}. Let $\robset{C}_j$ be defined by  \eqref{eq:robust h}, \eqref{eq:constraint set multiple} for the thrice-continuously differentiable, relative degree two functions $h_j(\myvar{x}): \myset{D}_j \to \mathbb{R}$, defined on the open set $\myset{D}_j \supset \myset{C}_j \supset \robset{C}_j$. Suppose there exists a twice-continuously differentiable, extended class-$\mathcal{K}$ function $\alpha_1$ and locally Lipschitz continuous extended class-$\mathcal{K}$ function $\alpha_2$, such that for $\robscalar{B}_j: \mathscr{E}_j \to \mathbb{R}$ (defined by \eqref{eq:candidate zeroing cbf multiple}) on the open set $\myset{E}_j \supset\robset{B}_j$ (defined by \eqref{eq:set B for zbf}), $\robubset{B}$ and $\robubset{C}$ are compact, and $\robubset{B} \cap \robubset{C}$ is non-empty. Then for a given $T\in \mathbb{R}_{>0}$, $N \in \mathbb{Z}_{>0}$, there exists an extended class-$\mathcal{K}$ function $\nu(T)$, and $\theta$, $\beta_j$, $\delta_j \in \mathbb{R}_{\geq 0}$, $j\in [1,l]$ such that for all $\myvar{x}(0) \in \robubset{B} \bigcap \robubset{C}$, $\myvarnorm{d}_{\infty} \leq \theta$, and any bounded, piece-wise constant control $\myvar{u}_k(\myvar{x}) \in \robubset{S}_{u_k}(\myvar{x})$ (defined by \eqref{eq:Su for zcbf sampled-data}), $\myvar{x}$ remains in $\myset{B} \bigcap \myset{C}$ for $t \in [0, NT)$.
\end{thm}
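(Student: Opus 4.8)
The plan is to reduce the sampled-data case to the continuous-time result of Theorem~\ref{thm:ZBF robust} by absorbing the inter-sampling error into an effective disturbance, and then choosing the margin $\nu(T)$ large enough to dominate that error over the finite horizon $[0,NT)$. The key observation is that the constraint-admissible control set \eqref{eq:Su for zcbf sampled-data} guarantees $L_f \robscalar{B}_j(\myvar{x}_k) + L_g \robscalar{B}_j(\myvar{x}_k)\myvar{u}_k + \alpha_2(\robscalar{B}_j(\myvar{x}_k)) \geq \nu(T)$ only at the sampling instants $t = kT$. To recover the continuous-time condition $\robscalardot{B}_j(\myvar{x}) \geq -\alpha_2(\robscalar{B}_j(\myvar{x}))$ for all $t \in [kT, (k+1)T)$, I would write, for $t$ in this interval,
\begin{equation*}
\robscalardot{B}_j(\myvar{x}(t)) = L_f\robscalar{B}_j(\myvar{x}_k) + L_g\robscalar{B}_j(\myvar{x}_k)\myvar{u}_k + \big[ L_f\robscalar{B}_j(\myvar{x}(t)) + L_g\robscalar{B}_j(\myvar{x}(t))\myvar{u}_k - L_f\robscalar{B}_j(\myvar{x}_k) - L_g\robscalar{B}_j(\myvar{x}_k)\myvar{u}_k \big] + L_{d}\robscalar{B}_j(\myvar{x}(t)),
\end{equation*}
and bound the bracketed term by a continuity/Lipschitz estimate $\|\myvar{x}(t) - \myvar{x}_k\|$ times a constant depending on bounds for $\nabla L_f\robscalar{B}_j$, $\nabla L_g\robscalar{B}_j$ (this is where the thrice-continuous differentiability of $h_j$, twice-continuous differentiability of $\alpha_1$, and local Lipschitzness of $\alpha_2$ are used, all on the relevant compact sets). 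A Gr\"onwall-type argument on \eqref{eq:sampled data system} gives $\|\myvar{x}(t) - \myvar{x}_k\| \leq L_1 T$ over one sampling period, using compactness of $\ubscr{B}\cap\ubscr{C}$ (hence boundedness of $\myvar{f},\myvar{g}$), boundedness of $\myvar{u}_k$, and $\|\myvar{d}\|_\infty \leq \theta$; together with $|\alpha_2(\robscalar{B}_j(\myvar{x}(t))) - \alpha_2(\robscalar{B}_j(\myvar{x}_k))| \leq L_2 T$ by the same estimate and Lipschitzness of $\alpha_2$. Collecting terms, $\robscalardot{B}_j(\myvar{x}(t)) \geq -\alpha_2(\robscalar{B}_j(\myvar{x}(t))) + \nu(T) - \rho(T) - c\,\theta$ for a class-$\mathcal{K}$ function $\rho$ with $\rho(T) = O(T)$ and a constant $c$.

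Next I would \emph{define} $\nu(T)$ to be an extended class-$\mathcal{K}$ function with $\nu(T) \geq \rho(T) + c\,\theta$ for the given $T$ (or, more carefully, pick $\nu$ first as a class-$\mathcal{K}$ function dominating the inter-sampling bound and then pick $\theta$ small so $c\,\theta$ is absorbed, which keeps the logical order of the quantifiers in the statement consistent). With this choice, $\robscalardot{B}_j(\myvar{x}(t)) \geq -\alpha_2(\robscalar{B}_j(\myvar{x}(t)))$ holds for all $t \in [0,NT)$ along the closed-loop trajectory. This is exactly the differential inequality that drives the comparison-lemma / Lyapunov argument in the proof of Theorem~\ref{thm:ZBF robust}: it yields asymptotic stability (in fact forward invariance, since $\myvar{x}(0) \in \robubset{B}$) of $\myset{B}_j$, which in turn makes $\robscalardot{h}_j \geq -\alpha_1(\robscalar{h}_j)$ hold, giving forward invariance of $\myset{C}_j$. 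I would then invoke the input-to-state-stability estimate from Proposition~5 of \cite{Xu2015a} exactly as in Theorem~\ref{thm:ZBF robust}, choosing $\beta_j = \gamma(\|\myvar{d}\|_\infty)$ and the corresponding $\delta_j$, to conclude $\myvarnorm{x}_{\myset{B}_j\cap\myset{C}_j} = 0$ for all $t \in [0,NT)$. Repeating over $j \in [1,l]$ finishes the argument.

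The main obstacle is the inter-sampling estimate: one must establish the Gr\"onwall bound $\|\myvar{x}(t) - \myvar{x}_k\| = O(T)$ \emph{and} the bound on the variation of $L_f\robscalar{B}_j + L_g\robscalar{B}_j\myvar{u}_k + \alpha_2(\robscalar{B}_j)$ \emph{uniformly} over the reachable set, which requires first confining the trajectory to a compact set on which all the derivative bounds hold — and that confinement is itself part of what we are trying to prove, so the argument must be run inductively over sampling intervals $k = 0,1,\dots,N-1$ (or on a maximal interval of existence, then shown to extend to $NT$ via the a priori bound). Care is also needed because $\robscalar{B}_j$ involves $\robscalardot{h}_j$, so $\robscalardot{B}_j$ involves second derivatives of $h_j$ along the flow and a derivative of $\alpha_1$ — this is precisely why the hypotheses strengthen $h_j$ to thrice-differentiable and $\alpha_1$ to twice-differentiable relative to Theorem~\ref{thm:ZBF robust}. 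Handling the finite horizon $[0,NT)$ rather than $[0,\infty)$ is what makes a fixed, $T$-dependent $\nu$ (as opposed to a vanishing one) acceptable, and also sidesteps any need for uniformity of the Caratheodory solution beyond $NT$.
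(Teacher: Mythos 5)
Your proposal follows essentially the same route as the paper's proof: decompose the inter-sampling mismatch in $L_f\robscalar{B}_j + L_g\robscalar{B}_j\myvar{u}_k + \alpha_2(\robscalar{B}_j)$ between $\myvar{x}(t)$ and $\myvar{x}_k$, bound it by Lipschitz constants times $\|\myvar{x}(t)-\myvar{x}_k\|$ using a Gr\"onwall/closeness-of-solutions estimate over one sampling period, choose $\nu(T)$ to dominate that bound, and then recover the continuous-time barrier inequality and conclude forward invariance (your explicit induction over sampling intervals to keep the trajectory in a compact set is, if anything, more careful than the paper's treatment of the constants). The only real difference is in handling the disturbance: you fold $c\,\theta$ into $\nu(T)$ (or shrink $\theta$), obtaining $\robscalardot{B}_j \geq -\alpha_2(\robscalar{B}_j)$ and hence invariance of $\robset{B}_j$ directly, whereas the paper keeps $L_d\robscalar{B}_j$ separate and dominates it with the margin $\beta_j$ via a class-$\mathcal{K}$ splitting $\alpha_2(\robscalar{B}_j)\leq\alpha_3(B_j)+\alpha_4(-\beta_j)$, concluding invariance of the unhatted set $\myset{B}_j$ by Nagumo's theorem --- both variants deliver the stated conclusion.
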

\begin{proof}
First, under the local Lipschitz continuity of $\myvar{f}$, $\myvar{g}$, and $\myvar{d}$, Caratheodory's Theorem \cite{Grune2011} ensures existence and uniqueness of the solution to \eqref{eq:sampled data system} for the bounded piece-wise constant control $\myvar{u}_k$ on the interval $t \in [0, NT)$.

Next, the extended class-$\mathcal{K}$ function $\nu(T)$ is constructed. Let $m$ be defined by:
\begin{multline*}
m = \Big(L_f \robscalar{B}_j(\myvar{x}) - L_f \robscalar{B}_j(\myvar{x}_k) \Big) + \Big( \alpha_2(\robscalar{B}_j(\myvar{x})) - \alpha_2(\robscalar{B}_j(\myvar{x}_k)) \Big) \\
+ \Big(L_g \robscalar{B}_j(\myvar{x}) - L_g \robscalar{B}_j(\myvar{x}_k) \Big) \myvar{u}_k , j \in [1,l]
\end{multline*}
By locally Lipschitz properties of $\myvar{f}(\myvar{x})$, $\myvar{g}(\myvar{x})$, $\robscalar{B}_j(\myvar{x})$, and $\alpha_2(\robscalar{B}_j(\myvar{x}))$ it follows that:
\begin{multline*}
||m|| \leq (c_1  + c_2+ c_3|| \myvar{u}_k || )  || \myvar{x} - \myvar{x}_k ||,   \\ \forall t \in [kT, (k+1)T), j \in [1,l]
\end{multline*}
where $c_1, c_2, c_3 \in \mathbb{R}_{>0}$ are the respective Lipschitz constants for $L_f \robscalar{B}_j$, $\alpha_2$, and $L_g \robscalar{B}_j$.
By closeness of solutions between $\myvar{x}$ and $\myvar{x}_k$ (Theorem 3.4 of \cite{Khalil2002}) and boundedness of $\myvar{u}_k$ for $||\myvar{u}_k|| \leq c_4$, $c_4 \in \mathbb{R}_{>0}$, it follows that 
\begin{multline*}
||m|| \leq \frac{(c_1 + c_2 + c_3 c_4) c_5}{c_1 + c_2 c_4} (e^{(c_1 + c_2 c_4)(t - kT)} - 1), \\  \ \forall t \in [kT, (k+1)T), j \in [1,l]
\end{multline*}
where $c_5 \geq || \myvar{f}(\myvar{x}) + \myvar{g}(\myvar{x}) \myvar{u}_k ||$. Note $c_5$ is guaranteed to exist over the interval $t \in [kT, (k+1)T)$ due to local Lipschitz continuity of $\myvar{f}$, $\myvar{g}$ and boundedness of $\myvar{u}_k$. Let $\nu_{k_j}(T)$ be defined by:
\begin{equation}\label{eq:sample data margin}
\nu_{k_j}(T)  :=  \frac{a }{b} (e^{bT} - 1), \ \forall t \in [kT, (k+1)T), j \in [1,l]
\end{equation}
where $a = (c_1+c_2 +c_3 c_4) c_5$ and $b =c_1 + c_2 c_4$. Let $\nu(T) := \underset{k,j}{\text{max}}\ \nu_{k_j}(T)$ such that $\nu(T) \geq \nu_{k_j}(T)$ for all $j \in [1,l], k \in [0, N]$.

From $\myvar{u}_k \in \robubset{S}_{u_k}$, it follows that:
\begin{multline}\label{eq:zcbf condition for sampled data systems}
L_f \robscalar{B}_j(\myvar{x}_k) + L_g \robscalar{B}_j(\myvar{x}_k) \myvar{u}_k  + \alpha_2(\robscalar{B}_j(\myvar{x}_k)) \geq  \nu(T), \\
\forall t \in [kT, (k+1)T), \forall j \in [1,l]
\end{multline}
Addition of $L_f \robscalar{B}_j(\myvar{x}) + L_g \robscalar{B}_j(\myvar{x}) \myvar{u}_k  + \alpha_2(\robscalar{B}_j(\myvar{x})) + L_d \robscalar{B}_j(\myvar{x})$ to each side of \eqref{eq:zcbf condition for sampled data systems} results in:
\begin{multline*}
L_f \robscalar{B}_j(\myvar{x}) + L_g \robscalar{B}_j(\myvar{x}) \myvar{u}_k  + \alpha_2(\robscalar{B}_j(\myvar{x})) + L_d \robscalar{B}_j(\myvar{x}) \\ \geq \nu(T) + m + L_d \robscalar{B}_j(\myvar{x})
\end{multline*}
From the previous derivation of $\nu(T)$, it follows that $\nu(T) \geq m$ such that the following holds:
\begin{equation*}
\robscalardot{B}_j(\myvar{x}) + \alpha_2(\robscalar{B}_j(\myvar{x})) \geq L_d \robscalar{B}_j(\myvar{x})
\end{equation*}

Next, the robustness margin, $\beta_j$ is used to account for the perturbation associated with $\myvar{d}$. It is trivial to show that there exists extended class-$\mathcal{K}$ functions $\alpha_3$, $\alpha_4$ such that $\alpha_2(\robscalar{B}_j) \leq \alpha_3(B_j) + \alpha_4(-\beta_j)$, which results in:
\begin{equation*}
\robscalardot{B}_j(\myvar{x}) + \alpha_3(B_j(\myvar{x})) \geq - \alpha_4(-\beta_j) + L_d \robscalar{B}_j(\myvar{x})
\end{equation*}
Choose $\beta_j \geq \alpha_4^{-1} \circ c_6 \theta$, where $\frac{\partial \robscalar{B}}{\partial \myvar{x}} \leq c_6$, such that $-\alpha_4(-\beta_j) \geq L_d \robscalar{B}_j(\myvar{x})$. From $\robscalar{B}_j = B_j - \beta_j$, it is straightforward to see that $\robscalardot{B}_j = \dot{B}_j$, and consequently $\dot{B}_j \geq - \alpha_3(B_j(\myvar{x}))$. According to Nagumo's theorem \cite{Blanchini2015}, $\myset{B}_j$ is forward invariant on $t \in [kT, (k+1)T)$. Repeated application for $j \in [1,l]$, $t \in [0, NT)$ ensures forward invariance of $\ubscr{B}$ for $t \in [0, NT)$. The same procedure is repeated for $\robscalar{h}_j$ and $\delta_j$, which concurrently ensures forward invariance of $\ubscr{C}$ and completes the proof.
\end{proof}

 Following the result from Theorem \ref{thm:sampled data zcbf}, the following control is proposed for mechanical systems:
\begin{align} \label{eq:consat sampled data qp}
\begin{split}
\myvar{u}_k^* \hspace{0.1cm} = \hspace{0.1cm} & \underset{\myvar{u} \in \mathbb{R}^m}{\text{argmin}}
\hspace{.1cm} \myvar{u}^T \myvar{u}  - 2 \myvar{u}_{\text{nom}_k}^T \myvar{u}  \\
& \text{s.t.}  \hspace{0.6cm} \apmat{A}(\myvar{x}_k) \myvar{u} \geq \apvar{b}(\myvar{x}_k) + \apscalar{\nu} \myvar{1} \\
& \hspace{0.7cm}  \myvar{u}_{\text{min}} \leq \myvar{u} \leq \myvar{u}_{\text{max}}
\end{split}
\end{align}
where $\myvar{u}_{\text{nom}_k} \in \mathbb{R}^m$ is the sampled nominal control and $\apscalar{\nu} \in \mathbb{R}_{\geq 0}$ is a tuning parameter estimating \eqref{eq:sample data margin} to negate the effects of sampling.

The following theorem guarantees constraint satisfaction via the proposed control \eqref{eq:consat sampled data qp} for mechanical systems:

\begin{thm}\label{thm:consat sampled data qp exact}
Consider the sampled-data system \eqref{eq:sampled data system}. Let $\robset{C}_j$ be defined by  \eqref{eq:robust h}, \eqref{eq:constraint set multiple} for the thrice-continuously differentiable, relative degree two functions $h_j(\myvar{x}): \myset{D}_j \to \mathbb{R}$, defined on the open set $\myset{D}_j \supset \myset{C}_j \supset \robset{C}_j$. Suppose there exists a twice-continuously differentiable, extended class-$\mathcal{K}$ function $\alpha_1$ and locally Lipschitz continuous extended class-$\mathcal{K}$ function $\alpha_2$, such that for $\robscalar{B}_j: \mathscr{E}_j \to \mathbb{R}$ (defined by \eqref{eq:candidate zeroing cbf multiple}) on the open set $\myset{E}_j \supset\robset{B}_j$ (defined by \eqref{eq:set B for zbf}), $\robubset{B}$ and $\robubset{C}$ are compact, and $\robubset{B} \cap \robubset{C}$ is non-empty. Let $\myvar{u}_{nom}$ be a nominal control with $\myvar{u}_{nom_k} = \myvar{u}_{nom}(t = kT)$. Then for a given $T$, there exists $\theta$, $\apscalar{\nu}$, $\beta_j$, $\delta_j \in \mathbb{R}_{\geq 0}$, $j\in [1,l]$ such that for $\myvarnorm{d}_{\infty} \leq \theta$, the control \eqref{eq:consat sampled data qp} applied to \eqref{eq:sampled data system} ensures $\myvar{x}$ remains in $\ubscr{B} \bigcap \ubscr{C}$ for $t \in [0, NT)$.
\end{thm}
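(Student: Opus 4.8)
The plan is to reduce everything to Theorem~\ref{thm:sampled data zcbf}: I would show that, at each sampling instant, the unique minimizer $\myvar{u}_k^*$ of the quadratic program~\eqref{eq:consat sampled data qp} is an element of the sampled-data admissible control set $\robubset{S}_{u_k}$, and is bounded and piece-wise constant, so that forward invariance of $\ubscr{B}\cap\ubscr{C}$ on $[0,NT)$ is exactly the conclusion of Theorem~\ref{thm:sampled data zcbf}. Existence and uniqueness of the state trajectory of~\eqref{eq:sampled data system} on $[0,NT)$ is again inherited from Caratheodory's theorem as in that proof.

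I would first fix the constants. Taking the box $[\myvar{u}_{\min},\myvar{u}_{\max}]$ from the actuator limits, every feasible control is bounded by $c_4 := \max\{\|\myvar{u}_{\min}\|,\|\myvar{u}_{\max}\|\}$; with this $c_4$, construct the margin $\nu(T)$ and the scalars $\theta,\beta_j,\delta_j$ precisely as in the proof of Theorem~\ref{thm:sampled data zcbf}, and set $\apscalar{\nu} := \nu(T)$ (any larger value works as well). With $U=[\myvar{u}_{\min},\myvar{u}_{\max}]$ and $\apscalar{\nu}=\nu(T)$, the $j$th row of the inequality $\apmat{A}(\myvar{x}_k)\myvar{u}\geq\apvar{b}(\myvar{x}_k)+\apscalar{\nu}\myvar{1}$ reads $L_f\robscalar{B}_j(\myvar{x}_k)+L_g\robscalar{B}_j(\myvar{x}_k)\myvar{u}+\alpha_2(\robscalar{B}_j(\myvar{x}_k))\geq\nu(T)$, which together with the box constraint is exactly the defining condition of $\robset{S}_{u_{k_j}}(\myvar{x}_k)$ in~\eqref{eq:Su for zcbf sampled-data}. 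Hence the feasible set of~\eqref{eq:consat sampled data qp} coincides with $\robubset{S}_{u_k}(\myvar{x}_k)$.

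Next I would argue well-posedness and conclude. Under the standing non-emptiness hypothesis (so that $\robubset{S}_{u_k}(\myvar{x}_k)\neq\emptyset$) the feasible set of~\eqref{eq:consat sampled data qp} is a non-empty closed convex polytope and the strictly convex, coercive objective $\myvar{u}^T\myvar{u}-2\myvar{u}_{\mathrm{nom}_k}^T\myvar{u}$ attains a unique minimizer $\myvar{u}_k^*$; note that because $\myvar{u}_k^*$ is held constant on $[kT,(k+1)T)$ no Lipschitz continuity of $\myvar{x}_k\mapsto\myvar{u}_k^*$ is needed, so Properties~\ref{prop:linear independence}--\ref{prop:complimentary slackness} play no role here. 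Since $\myvar{u}_k^*$ satisfies the QP constraints it lies in $\robubset{S}_{u_k}(\myvar{x})$, and it is bounded by $c_4$ and piece-wise constant. Invoking Theorem~\ref{thm:sampled data zcbf} with the $T,N,\nu(T),\theta,\beta_j,\delta_j$ fixed above, and using $\myvar{x}(0)\in\robubset{B}\cap\robubset{C}$ with $\|\myvar{d}\|_\infty\leq\theta$, yields $\myvar{x}\in\ubscr{B}\cap\ubscr{C}$ for $t\in[0,NT)$, which is the claim.

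The main obstacle is the consistency of the constant selection in the second step: the margin $\nu(T)$ supplied by Theorem~\ref{thm:sampled data zcbf} is itself built from Lipschitz constants $c_1,\dots,c_5$ that depend on the control bound $c_4$, while feasibility of~\eqref{eq:consat sampled data qp} requires the tightened barrier inequality $\apmat{A}\myvar{u}\geq\apvar{b}+\apscalar{\nu}\myvar{1}$ to still admit a control inside the same box $[\myvar{u}_{\min},\myvar{u}_{\max}]$. The choices must therefore be made in the right order --- box limits first, then $c_4$, then $\nu(T)$, then $\apscalar{\nu}\geq\nu(T)$ --- and one needs the box to be wide enough (a non-degeneracy/actuation-authority condition, consistent with the non-emptiness hypothesis) for $\robubset{S}_{u_k}$ to remain non-empty after tightening. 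Everything past that point is a direct bookkeeping reduction to Theorem~\ref{thm:sampled data zcbf}.
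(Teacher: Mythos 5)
Your proposal is correct and follows essentially the same route as the paper's proof: fix $\nu(T)$ from the proof of Theorem~\ref{thm:sampled data zcbf}, choose $\apscalar{\nu}\geq\nu(T)$, observe that the QP constraints force the unique minimizer $\myvar{u}_k^*$ to be bounded and to lie in $\robubset{S}_{u_k}$, and conclude by Theorem~\ref{thm:sampled data zcbf}. Your additional remarks on the ordering of the constant selection and on feasibility of the tightened constraint set are more careful than the paper's terse argument (which implicitly assumes the QP is feasible, an assumption the authors only make explicit later for the grasping application), but they do not change the underlying approach.
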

\begin{proof}
Existence and uniqueness of $\myvar{u}_k^*$ is ensured by the linear constraints with respect to $\myvar{u}$ in \eqref{eq:consat sampled data qp} \cite{Nocedal2006}. Let $\nu(T)$ be defined from the proof of Theorem \ref{thm:sampled data zcbf}, and choose $\apscalar{\nu}$ sufficiently large such that $\apscalar{\nu} \geq \nu(T)$ for all $j \in [1,l]$, $t \in [0,NT)$. Now, the constraints of \eqref{eq:consat sampled data qp} ensures $\myvar{u}_k^*$ is bounded and $\myvar{u}_k^* \in \ubscr{S}_{u_k}$ such that $\myvar{x}$ remains in $\ubscr{B} \cap \ubscr{C}$ for $t \in [0, NT)$ via Theorem \ref{thm:sampled data zcbf}.
\end{proof}

\begin{remark}
An alternative approach to addressing control barrier functions for sampled-data systems is to employ existing emulation techniques for set stability to the continuous-time controller \eqref{eq:consat qp robust ct} \cite{Grune2008}. However this alternative requires more restrictive conditions including sufficient smoothness of all components of \eqref{eq:consat qp robust ct} in addition to Properties \ref{prop:linear independence} and \ref{prop:complimentary slackness}. On the other hand, by exploiting the properties of sampled-data systems, the proposed method does not require those unnecessary conditions  and is thus applicable to more general systems.
\end{remark}

\section{Grasp Constraint Satisfaction} \label{sec:system model}

In the previous section, a novel control barrier function method was proposed to ensure constraint satisfaction for mechanical systems. In this section, that method is applied to the challenging problem of robotic grasping, in which a robotic hand is required to grasp and manipulate an object to a desired reference pose trajectory. The application of the proposed zeroing control barrier method to this problem is not trivial, and is an additional contribution of this paper. The following sections develop the appropriate model and conventional assumptions required in related grasping work. Then, appropriate analysis is presented to ensure constraint satisfaction for robotic grasping.

\subsection{Hand-Object System}

Consider a fully-actuated, multi-fingered hand grasping a rigid, convex object at $n \in \mathbb{Z}_{>0}$ contact points.  Each finger consists of $m_i \in \mathbb{Z}_{>0}$ revolute joints with smooth, convex fingertips of high stiffness. Let the finger joint configuration be described by the joint angles, $\myvar{q}_i \in \mathbb{R}^{m_i}$. The full hand configuration is defined by the joint angle vector, $\myvar{q} = (\myvar{q}_1, \myvar{q}_2, ..., \myvar{q}_n)^T \in \mathbb{R}^m$, where $m = \sum_{i=1}^n m_i$ is the total number of joints. Let the inertial frame, $\mathcal{P}$, be fixed on the palm of the hand, and a  fingertip frame, $\mathcal{F}_i$, fixed at the point $\myvar{p}_{f_i} \in \mathbb{R}^3$. The translational and rotational velocities of $\mathcal{F}_i$ with respect to $\mathcal{P}$ are denoted $\myvar{v}_{f_i}, \myvar{\omega}_{f_i} \in \mathbb{R}^3$, respectively. The rotation matrix from $\mathcal{F}_i$ to $\mathcal{P}$ is $R_{pf_i} := R_{pf_i}(\myvar{q}_i) \in SO(3)$. The contact frame, $\mathcal{C}_i$, is located at the contact point, $\myvar{p}_{c_i} \in \mathbb{R}^3$.  A visual representation of the contact geometry for the $i$th finger is shown in Figure \ref{fig.contactpic}. 
%
%
%

\begin{figure}[hbtp]
\centering
	\subcaptionbox{ \label{fig.contactpic} }
		{\includegraphics[scale=.37]{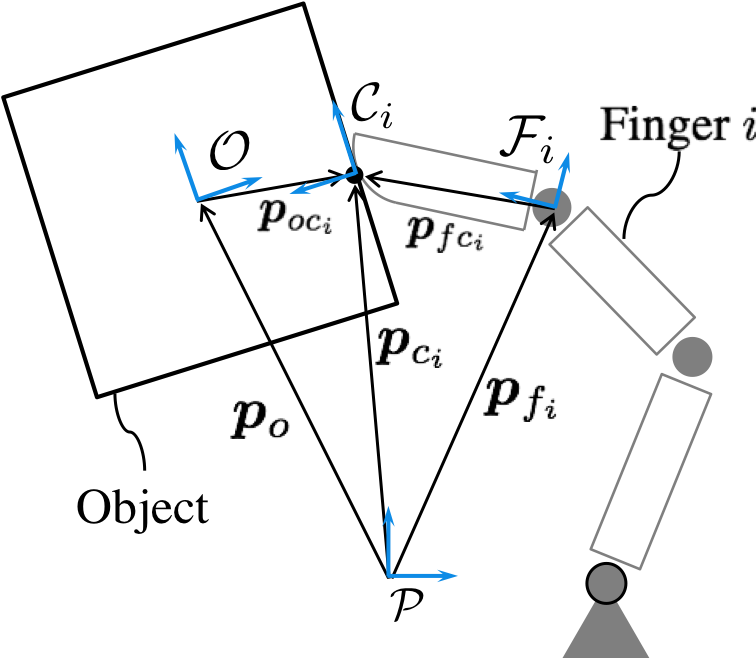}}
	\subcaptionbox{  \label{fig:contact frame pic}}
		{\includegraphics[scale=.29]{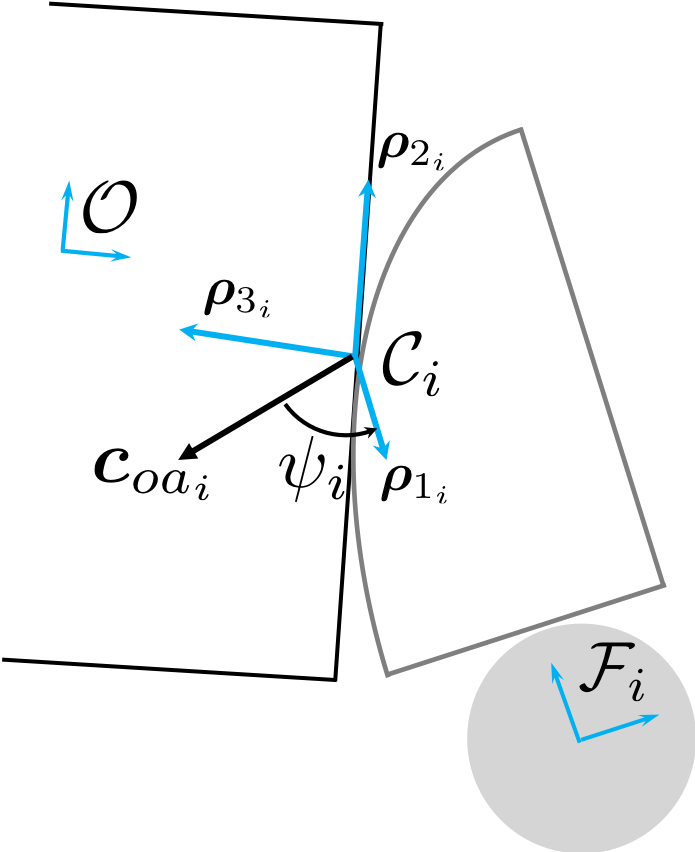}}
	\caption{Hand-object and contact frames for contact $i$.}  \label{fig.contactpic figures}
\end{figure}

The hand dynamics incorporating each finger (i.e. robotic manipulator) of the grasp is as follows \cite{Murray1994}:
\begin{equation} \label{eq:hand dynamics}
M_h \bmdd{q} + C_h \myvardot{q} = -J_h^T \myvar{f}_c +  \myvar{\tau}_e + \myvar{u}
\end{equation}
where $M_{h} := M_{h}(\myvar{q}) \in \mathbb{R}^{m\times m}$ is the inertia matrix, $C_{h} :=  C_{h}(\myvar{q},\myvardot{q}) \in \mathbb{R}^{m \times m}$ is the Coriolis/centrifugal matrix, $J_h := J_h( \myvar{q}, \myvar{p}_{fc}) \in \mathbb{R}^{3n \times m}$ is the hand Jacobian, $\myvar{f}_c \in \mathbb{R}^{3n}$ is the contact force, $\myvar{\tau}_e :=  \myvar{\tau}_e(t) \in \mathbb{R}^m$ is the disturbance torque acting on the joints, and  $\myvar{u} \in \mathbb{R}^m$ is the joint torque control input. Note each $M_h$, $C_h$, $J_h$ is a block diagonal matrix of the individual inertia, Coriolis/centrifugal, and hand Jacobian matrix respectively, and each $\myvar{f}_c$, $\myvar{\tau}_e$, $\myvar{u}$ is a vector concatenation of the individual contact forces, $\myvar{f}_{c_i}$, external torques $ \myvar{\tau}_{e_i}$, and control inputs $ \myvar{u}_i$, respectively. For ease of notation, the matrix $E_i \in \mathbb{R}^{m_i \times m}$ is used to map from the full concatenated vector form to the individual vector, such that for example, $\myvardot{q}_i = E_i \myvardot{q}$. Note the individual hand Jacobian matrix, $J_{h_i}$, is defined by:
\begin{equation}\label{eq:hand jacobian}
J_{h_i}(\myvar{q}_i,\myvar{p}_{fc_i}) = \bracketmat{cc}{I_{3\times 3} & -(\myvar{p}_{fc_i})\times} J_{s_i}(\myvar{q}_i)
\end{equation}
where $J_{s_i}(\myvar{q}_i) \in \mathbb{R}^{6 \times mi}$ is the spatial manipulator Jacobian that maps $\myvardot{q}_i \mapsto (\myvar{v}_{f_i}, \myvar{\omega}_{f_i})$ \cite{Murray1994}. 

Let $\mathcal{O}$ be a reference frame fixed at the object center of mass $\myvar{p}_o \in \mathbb{R}^3$, and $R_{po} \in  SO(3)$ is the rotation matrix, which maps from $\mathcal{O}$ to $\mathcal{P}$. The respective inertial translation and rotational velocities of the object are $\myvar{v}_o, \myvar{\omega}_o \in \mathbb{R}^3$.  The object state is $\myvar{x}_o \in \mathbb{R}^6$, with $\myvardot{x}_o = (\myvar{v}_o, \myvar{\omega}_o)$. The position vector from $\mathcal{O}$ to the respective contact point is $\myvar{p}_{oc_i} \in \mathbb{R}^3$.

The object dynamics are given by \cite{Murray1994}:
\begin{equation} \label{eq:object dynamics}
M_o\bmdd{x}_o + C_o \myvardot{x}_o = G \myvar{f}_c + \myvar{w}_e
\end{equation}
where $M_o := M_o(\myvar{x}_o) \in \mathbb{R}^{6\times 6}$ is the object inertia matrix, $C_o := C_o(\myvar{x}_o,\myvardot{x}_o) \in \mathbb{R}^{6\times 6}$ is the object Coriolis and centrifugal matrices, $G :=  G(\myvar{p}_{oc})  \in \mathbb{R}^{6 \times 3n}$ is the grasp map, and $\myvar{w}_e :=  \myvar{w}_e(t) \in \mathbb{R}^6$ is an external wrench disturbing the object. The grasp map, $G$, maps the contact force, $\myvar{f}_c$, to the net wrench acting on the object and is defined by:
 \begin{equation}\label{eq:grasp map}
G(\myvar{p}_c, \myvar{p}_o) = \left[ \begin{array}{cccc} I_{3\times 3}, & ... ,& I_{3\times 3} ,\\ (\myvar{p}_{c_1}-\myvar{p}_o)\times, & ... ,& (\myvar{p}_{c_n}-\myvar{p}_o)\times \end{array} \right]
\end{equation}
 
When grasping an object, it is important to prevent slip from occurring by ensuring each contact force remains inside the friction cone defined by:
\begin{equation}\label{eq:friction cone}
F_{c_i}= \{ \myvar{f}^{\mathcal{C}_i}_{c_i} \in \mathbb{R}^3 : f_{n_i} \mu \geq \sqrt{f_{x_i}^2 + f_{y_i}^2} \}
\end{equation}
where $\myvar{f}^{\mathcal{C}_i}_{c_i} = (f_{x_i}, f_{y_i}, f_{n_i}) $ is the contact force at $i$ written in frame $\mathcal{C}_i$ with tangential force components $f_{x_i}, f_{y_i} \in \mathbb{R}$ and normal force component $f_{n_i} \in \mathbb{R}$, and $\mu \in \mathbb{R}_{>0}$ is the friction coefficient. The full friction cone is the Cartesian product of all the friction cones: $F_c = F_{c_1} \times ... \times F_{c_n}$. 

 When the contact points do not slip, the following grasp relation holds \cite{Cole1989}:
\begin{equation}\label{eq:grasp constraint}
J_h \myvardot{q} = G^T \myvardot{x}_o
\end{equation}

The following assumptions are made for the grasp:
\begin{assumption}\label{asm:square Jh}
The multi-fingered hand has $m \geq 3 n$ joints.
\end{assumption}
\begin{assumption}\label{asm:full rank G}
The given multi-fingered grasp is such that $G$ is full rank and $\mathcal{R}(G) \bigcap \text{Int}(FC) \neq \emptyset$.
\end{assumption}
\begin{assumption}\label{asm:smooth surfaces}
The system dynamics \eqref{eq:hand dynamics}, \eqref{eq:object dynamics}, and local fingertip/object contact surfaces are smooth.
\end{assumption}
%

\begin{remark} \label{rm:Jh to slip}
 Assumption \ref{asm:full rank G} ensures the grasp is force-closure, which means that for any given object wrench, there exists a contact force that produces the given wrench and also lies inside the friction cone \cite{Cole1989}. This force-closure condition is a conventional assumption in robotic grasping literature, and can be ensured by a high-level grasp planner \cite{Hang2016}. 
\end{remark}

\subsection{Hand-Contact Kinematics}\label{ssec:geometric parameters}

Here the differential geometric modeling of rolling contacts is reviewed as presented in \cite{Murray1994, Montana1988}. Note, the subscript $co$ will refer to the object surface of the contact, and the subscript $cf$ refers to the fingertip surface of the contact. At each contact point, the contact surfaces are parameterized by local coordinates  $\myvar{\xi}_{co_i} = (a_{co_i}, b_{co_i}), \myvar{\xi}_{cf_i} = (a_{cf_i}, b_{cf_i})$. The relation between the local coordinates and contact position vectors are defined by smooth mappings: $\myvar{p}^{\mathcal{F}_i}_{fc_i} = \myvar{c}_{cf_i}(\myvar{\xi}_{cf_i}), \myvar{p}^{\mathcal{O}}_{oc_i} = \myvar{c}_{co_i}(\myvar{\xi}_{co_i})$. 

The geometric parameters including the metric tensor, curvature tensor, and torsion tensor are used to define the rolling contact kinematics. For ease of notation, $\myvar{c}_{fa}, \myvar{c}_{fb}$ respectively denote $ \frac{\partial \myvar{c}_{cf_i}}{\partial a_{cf_i}}$ and  $\frac{\partial \myvar{c}_{cf_i}}{\partial b_{cf_i}}$. Similarly let $\myvar{c}_{oa}, \myvar{c}_{ob}$ respectively denote $ \frac{\partial \myvar{c}_{co_i}}{\partial a_{co_i}}$ and  $\frac{\partial \myvar{c}_{co_i}}{\partial b_{co_i}}$.

The Gauss frame is used to define the contact frame $\mathcal{C}_i$, which is depicted in Figure \ref{fig:contact frame pic}:
\begin{equation}\label{eq:gauss frame}
R_{fc_i} = \bracketmat{ccc}{\myvar{\rho}_1 & \myvar{\rho}_2 & \myvar{\rho}_3} = \bracketmat{ccc}{  \frac{\myvar{c}_{fa}}{|| \myvar{c}_{fa} ||} & \frac{\myvar{c}_{fa}}{|| \myvar{c}_{fb} ||} & \frac{\myvar{c}_{fa} \times \myvar{c}_{fb}}{ || \myvar{c}_{fa} \times \myvar{c}_{fb} ||} }
\end{equation}

where $R_{fc_i} \in SO(3)$ maps $\mathcal{C}_i$ to $\mathcal{F}_i$.

The metric tensor, $M_{cf_i} := M_{cf_i}(\myvar{\xi}_{cf_i}) \in \mathbb{R}^{2\times 2}$, curvature tensor, $K_{cf_i} := K_{cf_i}(\myvar{\xi}_{cf_i}) \in \mathbb{R}^{2\times 2}$, and torsion tensor, $T_{cf_i} := T_{cf_i}(\myvar{\xi}_{cf_i})  \in \mathbb{R}^{2\times 1}$ are defined by:
\begin{equation}\label{eq:kinematic tensors}
\begin{split}
M_{cf_i} &= \bracketmat{cc}{ || \myvar{c}_{fa} || & 0 \\ 0 & ||\myvar{c}_{fb} || } \\
 K _{cf_i}&= \bracketmat{c}{\myvar{\rho}_1^T \\ \myvar{\rho}_2^T} \bracketmat{cc}{ \frac{\partial \myvar{\rho}_3/\partial a_{cf_i}}{|| \myvar{c}_{fa} ||} & \frac{\partial \myvar{\rho}_3/\partial b_{cf_i}}{|| \myvar{c}_{fb} ||}}\\
 T_{cf_i} &= \myvar{\rho}_2^T \bracketmat{cc}{  \frac{\partial \myvar{\rho}_1 / \partial a_{cf_i}}{|| \myvar{c}_{fa} ||}  & \frac{\partial \myvar{\rho}_1/\partial b_{cf_i}}{|| \myvar{c}_{fb} ||}}
\end{split}
\end{equation}

The same geometric parameters for the object, $M_{co_i} := M_{co_i}(\myvar{\xi}_{co_i}) \in \mathbb{R}^{2\times 2}, K_{co_i} := K_{co_i}(\myvar{\xi}_{co_i}) \in \mathbb{R}^{2\times 2}, T_{co_i} = T_{co_i}(\myvar{\xi}_{co_i}) \in \mathbb{R}^{2\times 1}$, are defined by appropriate substitution of $\myvar{\xi}_{cf_i}$ with $\myvar{\xi}_{co_i}$ in \eqref{eq:gauss frame}-\eqref{eq:kinematic tensors}: 
\begin{equation}\label{eq:obj kinematic tensors}
\begin{split}
M_{co_i} &= \bracketmat{cc}{ || \myvar{c}_{oa}|| & 0 \\ 0 & ||\myvar{c}_{ob} || } \\
K _{co_i} &= \bracketmat{c}{\myvar{\rho}_1^T \\ \myvar{\rho}_2^T} \bracketmat{cc}{ \frac{\partial \myvar{\rho}_3/\partial a_{co_i}}{|| \myvar{c}_{oa} ||} & \frac{\partial \myvar{\rho}_3/\partial b_{co_i}}{|| \myvar{c}_{ob} ||}} \\
T_{co_i} &= \myvar{\rho}_2^T \bracketmat{cc}{  \frac{\partial \myvar{\rho}_1 / \partial a_{co_i}}{|| \myvar{c}_{oa} ||}  & \frac{\partial \myvar{\rho}_1/\partial b_{co_i}}{|| \myvar{c}_{ob} ||}}
\end{split}
\end{equation}

Now the equations of motion for $\myvar{\xi}_{cf_i}$ and $\myvar{\xi}_{co_i}$ are defined as follows:
\begin{multline}\label{eq:contact kinematics}
\myvardot{\xi}_{cf_i} = M_{cf_i}^{-1} ( K_{cf_i} \\ + R_{\psi_i} K_{co_i} R_{\psi_i} )^{-1} \bracketmat{ccc}{0 & -1 & 0 \\ 1 & 0 & 0}
 R_{c_i p} (\myvar{\omega}_{f_i} - \myvar{\omega}_o)
\end{multline}
\begin{multline}\label{eq:contact object kinematics}
\myvardot{\xi}_{co_i} = M_{co_i}^{-1} R_{\psi_i}(K_{cf_i} \\ + R_{\psi_i}K_{co_i} R_{\psi_i})^{-1}  \bracketmat{ccc}{0 & -1 & 0 \\ 1 & 0 & 0}
R_{c_i p} (\myvar{\omega}_{f_i} - \myvar{\omega}_o) 
\end{multline}
where 
\begin{equation}\label{eq:R_psi}
R_{\psi_i} = \bracketmat{cc}{ \cos(\psi_i) & -\sin(\psi_i) \\ -\sin(\psi_i) & -\cos(\psi_i)},
\end{equation}
and $R_{c_i p} = R_{fc_i}^T R_{pf_i}^T $ maps $\mathcal{P}$ to $\mathcal{C}_i$. The contact angle dynamics is defined by:
\begin{equation}\label{eq:contact psi dynamics}
\dot{\psi}_i = T_{cf_i} M_{cf_i} \myvardot{\xi}_{cf_i} + T_{co_i} M_{co_i} \myvardot{\xi}_{co_i}
\end{equation}
where  $\psi_i \in \mathbb{R}$ is the angle between $\frac{\partial \myvar{c}_{co_i} }{\partial a_{co_i}}$ and  $\frac{\partial \myvar{c}_{cf_i} }{\partial a_{cf_i}}$ (see Figure \ref{fig:contact frame pic}).

It is important to note the chosen parameterizations must satisfy the following assumption for \eqref{eq:contact kinematics}, \eqref{eq:contact object kinematics}  to be well-defined \cite{Murray1994}:
\begin{assumption}\label{asm:geometric parameterization}
The parameterizations are orthogonal such that $\frac{\partial \myvar{c}_{f_i} }{\partial a_{cf_i}}^T \frac{\partial \myvar{c}_{f_i} }{\partial b_{f_i}} = 0$, $\frac{\partial \myvar{c}_{o_i} }{\partial a_{o_i}}^T \frac{\partial \myvar{c}_{o_i} }{\partial b_{o_i}} = 0$, and $M_{cf_i}, K_{cf_i}, T_{cf_i}, M_{co_i}, K_{co_i}, T_{co_i} $ are defined for all $\myvar{\xi}_{cf_i}$ on the fingertip surface, and $\myvar{\xi}_{co_i}$ on the object surface, respectively.
\end{assumption}

\subsection{Problem Formulation}\label{ssec:problem formulation}

The formal definition of the grasp constraint satisfaction problem is presented here.

\subsubsection*{Contact Force Set}

No slip is ensured by guaranteeing that the contact forces remain inside the friction cone, $\mathcal{FC}$. A well-known technique is to approximate the friction cone by a pyramid, which results in the following linear constraint condition \cite{Kerr1986}:
\begin{equation}\label{eq:no slip condition}
\Lambda(\mu) R_{cp} \myvar{f}_c > 0
\end{equation}
where $\Lambda(\mu) \in \mathbb{R}^{l_s n \times 3 n}$ defines a pyramid of $l_s \in \mathbb{Z}_{>0}$ faces used to a approximate the friction cone \cite{Kerr1986}, and $ R_{cp} \in \mathbb{R}^{3n \times 3n} $ is the block diagonal matrix of all $R_{c_i p}$ for $i \in [1,n]$. Let the set of constraint admissible contact forces be defined as:
\begin{equation}\label{eq:safe contact forces}
\ubscr{C}_{f} = \{ \myvar{f}_c \in \mathbb{R}^{3n}: \Lambda(\mu) R_{cp} \myvar{f}_c > 0 \}
\end{equation}

\subsubsection*{Joint Angle Set}
Over-extension of the joints is prevented by constraining the joint angles within a feasible workspace. The constraints on the joint angles are defined by the following box constraints:
\begin{equation}\label{eq:joint angle limits}
 \begin{split}
  h_{q\text{max}_j}(\myvar{q}) &= - \myvar{i}_j \myvar{q} + q_{\text{max}_j},  \forall j\in [1,m] \\
  h_{q\text{min}_j}(\myvar{q}) &=  \myvar{i}_j \myvar{q} - q_{\text{min}_j}, \forall j\in [1,m]
 \end{split}
\end{equation} 
where $\myvar{i}_j \in \mathbb{R}^{1\times m}$ is the $j$th row of $I_{m\times m}$ and $q_{\text{max}_j}, q_{\text{min}_j} \in \mathbb{R}_{\geq0}$ define the joint angle limits, which omit singular hand configurations. The set of constraint admissible joint angles is defined by:
\begin{equation}\label{eq:safe joint space}
\ubscr{C}_{q} = \{ \myvar{q} \in \mathbb{R}^m: \forall j \in [1,m]: h_{q\text{max}_j}(\myvar{q})  \geq 0,  h_{q\text{min}_j} (\myvar{q}) \geq 0 \} 
\end{equation}

\subsubsection*{Contact Location Set}
Excessive rolling occurs when the contact locations exceed the workspace of the fingertip. Many existing tactile sensors are designed as fingertips with flat, hemispherical, or other relatively simple geometric surface \cite{Kappassov2015}, which can be appropriately modeled with geometric parameterizations \cite{Murray1994}. The benefit of the geometric modeling is not only that it can be applied to these fingertip shapes, but the fingertip workspace can be defined as box constraints: 
\begin{equation}\label{eq:fingertip workspace constraints}
\begin{split}
h_1(\myvar{\xi}_{cf_i}) & = a _{cf_i} - a_{\text{min}},\  h_2(\myvar{\xi}_{cf_i}) =  - a _{cf_i} + a_{\text{max}}  \\ 
h_3(\myvar{\xi}_{cf_i}) & =   b _{cf_i} - b_{\text{min}},\  h_4(\myvar{\xi}_{cf_i}) =  - b _{cf_i} + b_{\text{max}}  
\end{split}
\end{equation}
where $a_{\text{min}}, a_{\text{max}}, b_{\text{min}}, b_{\text{max}} \in \mathbb{R}$ define the boundary of the fingertip surface. Each $h_{r_j}$ defines the box constraints such that if $h_{r_j} \geq 0, \forall j \in [1,4]$, then the contact point is in the fingertip workspace. The set of allowable contact locations for each contact is:
\begin{equation}\label{eq:safe contact locations}
\ubscr{C}_{r_i} = \{ \myvar{\xi}_{cf_i} \in \mathbb{R}^2: \forall j \in [1,4]: h_{r_j}(\myvar{\xi}_{cf_i})  \geq 0 \}
\end{equation}
and the full set of feasible contact locations is:
\begin{equation}\label{eq:safe contact locations multiple}
\ubscr{C}_r = \{  \myvar{\xi}_{cf} \in \mathbb{R}^{2n}:  \forall i \in [1,n]: \myvar{\xi}_{cf_i} \in\mathscr{C}_{r_i} \}
\end{equation}

\subsubsection*{Constraint-admissible States}
Let $\mathscr{H}$, defined by:
\begin{equation}\label{eq:set H}
 \mathscr{H} =  \ubscr{C}_{f} \times \ubscr{C}_q \times \ubscr{C}_r
\end{equation}
denote the set of grasp constraint admissible states. In the set $\mathscr{H}$, the hand configuration is non-singular, and, by Assumption \ref{asm:square Jh}, $J_h$ is full rank with rank $3n$. Furthermore, the contact points do not slip in $\mathscr{H}$ and so the grasp relation \eqref{eq:grasp constraint} holds. These properties of $\mathscr{H}$ are attractive, and exploited in related literature where the states are assumed to remain in $\mathscr{H}$ without any guarantee of such a claim \cite{Ozawa2017}. 

The problem to be addressed is formally stated as:

\begin{problem}\label{pr:grasp constraint satisfaction}
Suppose the hand-object system satisfies Assumptions \ref{asm:square Jh}-\ref{asm:geometric parameterization}, and consider the set of constraint-admissible states $\mathscr{H}$ defined by \eqref{eq:set H}. Determine a control law that ensures forward invariance of $\mathscr{H}$.
\end{problem}

\subsection{Proposed Grasp Constraint Satisfying Control}\label{sec:proposed solution}

In the following section, the control input $\myvar{u}_k$ is proposed to ensure no slip, no over-extension, and no excessive rolling are satisfied. Satisfaction of no over-extension and no excessive rolling constraints is addressed using the zeroing control barrier functions from Section \ref{sec:control barrier fcns}, while satisfaction of the no slip condition is developed as an extension of \cite{ShawCortez2018b}. The input constraints are then combined to define the proposed controller to address Problem \ref{pr:grasp constraint satisfaction}.

\subsubsection{Grasp Constraint Conditions}

In practice, the robotic hand may have limited information about the object and/or may only have access to embedded sensors including joint angle senors and tactile sensors to gather information of the grasp. Here model uncertainties are taken into account to develop a controller that ensures forward invariance of the grasp constraint-admissible set $\mathscr{H}$. For notation, an approximation is denoted by $\hat{(\cdot)}$ and the error by $\Delta(\cdot)$. For example, the object inertia matrix is defined as $M_o = \apmat{M}_o + \Delta(M_o)$.

To develop a robust grasp constraint satisfying controller, the approach taken here is to exploit the robustness properties of zeroing control barrier functions presented in Section \ref{sec:control barrier fcns}. To do so, first the contact force, $\myvar{f}_c$, must be analyzed with respect to model uncertainties as it defines the interaction between the hand and object. Differentiation of \eqref{eq:grasp constraint}, and substitution of \eqref{eq:hand dynamics} and \eqref{eq:object dynamics}  provides an expression for the contact forces:
\begin{multline}\label{eq:contact force}
\myvar{f}_c = B_{ho}^{-1}  \Big( J_h M_h^{-1} ( -C_h \myvardot{q} + \myvar{u}_k + \myvar{\tau}_e) + \dot{J}_h \myvardot{q} 
 -\dot{G}^T \myvardot{x}_o \\ + G^T M_o^{-1}( C_o \myvardot{x}_o - \myvar{w}_e) \Big)
\end{multline}
where $B_{ho} = ( J_h M_h^{-1} J_h^T + G^T M_o^{-1} G )$. Note that by Assumptions \ref{asm:square Jh} and \ref{asm:full rank G}, $B_{ho}^{-1}$ is well defined for all states in $\mathscr{H}$. Furthermore, as with previous notation, $\apvar{f}_c$ will denote the approximate contact force, and $\Delta(\myvar{f}_c)$ denotes the error between the nominal and exact contact force.

From \eqref{eq:contact force}, it is clear that the contact force is affected by model uncertainties, which must be appropriately taken into consideration. In addressing robustness, note that complete lack of information of the system is extremely limiting, and a hard problem to address. For practical consideration, several simplifying assumptions are made, which are listed as follows:

\begin{assumption}\label{asm:bounded friction}
The approximate friction coefficient satisfies: $\apscalar{\mu} \leq \mu(T) \leq \mu$, where $\mu(T)$ is the conservative friction coefficient to address inter-sampling behavior as discussed in \cite{ShawCortez2018b}.
\end{assumption}

\begin{assumption}\label{asm:bounded approximations}
All approximation errors, including $\Delta(\myvar{f}_c)$, are bounded.
\end{assumption}

\begin{remark}
From Assumption \ref{asm:bounded friction}, $\apscalar{\mu}$ defines the lower bound on the allowable contact friction. In practice, the designer should choose $\apscalar{\mu}$ for the specified task where a small $\apscalar{\mu}$ could be chosen to grasp a larger range of objects, including slippery objects such as ice, however more  control effort will be required to do so. Assumption \ref{asm:bounded approximations} is valid as contact forces never tend to infinity in practice.
\end{remark}

In the following sections, constraints are defined on $\myvar{u}_k$ to ensure $\mathscr{H}$ is forward invariant despite perturbations that result from model uncertainties.

\subsubsection*{Contact Force Constraint}

Starting with the no slip condition, first note a property of the friction cone in relation to $\mu$ in \eqref{eq:no slip condition}. The use of the more conservative friction coefficient, $\apscalar{\mu}$, that satisfies Assumption \ref{asm:bounded friction} in \eqref{eq:no slip condition} implies that \eqref{eq:no slip condition} holds for the true friction coefficient, $\mu$. Thus $\apscalar{\mu}$ can be directly substituted for $\mu$ in \eqref{eq:no slip condition} to ensure no slip. 

Next, the model uncertainties are explicitly investigated to develop a robust means of ensuring no slip. To do so, let the approximate contact force be explicitly defined as:
\begin{multline}\label{eq:contact force approximation}
\apvar{f}_c = \apmat{B}_{ho}^{-1}  \Big( \apmat{J}_h \apmat{M}_h^{-1} ( -\apmat{C}_h \myvardot{q} + \myvar{u}_k + \apvar{\tau}_e) + \apmatdot{J}_h \myvardot{q} \\
-\apmatdot{G}^T \apvardot{x}_o + \apmat{G}^T \apmat{M}_o^{-1}( \apmat{C}_o  \apvardot{x}_o  - \apvar{w}_e) \Big)
\end{multline}
where  $\apmat{B}_{ho} = \apmat{J}_h \apmat{M}_h^{-1} \apmat{J}_h^T + \apmat{G}^T \apmat{M}_o^{-1} \apmat{G}$.

By substituting $\myvar{f}_c$ with $\apvar{f}_c + \Delta(\myvar{f}_c)$, where $\apvar{f}_c$ is defined as in \eqref{eq:contact force approximation}, and $\apscalar{\mu}$ for $\mu$ in \eqref{eq:no slip condition}, the following relation must hold to ensure no slip:
\begin{multline}\label{eq:no slip condition with Delta fc}
\Lambda(\apscalar{\mu}) R_{cp} \apmat{B}_{ho}^{-1} \apmat{J}_h \apmat{M}_h^{-1} \myvar{u}_k   > \Lambda(\apscalar{\mu}) R_{cp} \Big( \apmat{B}_{ho}^{-1} \apmat{J}_h \apmat{M}_h^{-1} (\apmat{C}_h \myvardot{q} \\- \apvar{\tau}_e)
- \apmatdot{J}_h \myvardot{q} + \apmatdot{G}^T \apvardot{x}_o - \apmat{G}^T \apmat{M}_o^{-1}( \apmat{C}_o \apvardot{x}_o - \apvar{w}_e) \Big)  -\Lambda(\apscalar{\mu}) R_{cp} \Delta(\myvar{f}_c)
\end{multline}

From Assumption \eqref{asm:bounded approximations} it follows that the term $\Lambda(\apscalar{\mu})R_{cp} \Delta(\myvar{f}_c)$ is bounded. Thus to ensure satisfaction of \eqref{eq:no slip condition with Delta fc}, and thus \eqref{eq:no slip condition}, a tuning parameter $\varepsilon \in \mathbb{R}_{\geq 0}$ is chosen to be larger than the bound on $\Lambda(\apscalar{\mu})R_{cp} \Delta(\myvar{f}_c)$, which is incorporated in the following no slip constraint:
\begin{multline}\label{eq:no slip constraint}
\Lambda(\apscalar{\mu}) R_{cp} \apmat{B}_{ho}^{-1} \apmat{J}_h \apmat{M}_h^{-1} \myvar{u}_k   \geq \Lambda(\apscalar{\mu}) R_{cp} \Big( \apmat{B}_{ho}^{-1} \apmat{J}_h \apmat{M}_h^{-1} (\apmat{C}_h \myvardot{q} \\ - \apvar{\tau}_e) 
- \apmatdot{J}_h \myvardot{q} + \apmatdot{G}^T \apvardot{x}_o - \apmat{G}^T \apmat{M}_o^{-1}( \apmat{C}_o \apvardot{x}_o - \apvar{w}_e) \Big) + \myvar{1}\varepsilon
\end{multline}
The satisfaction of \eqref{eq:no slip constraint} by $\myvar{u}$ thus ensures that the no slip condition \eqref{eq:no slip condition} is satisfied for an appropriately chosen $\varepsilon$. Let the set of admissible control torques for $\mathscr{C}_f$ be $\mathscr{S}_{u_f} = \{ \myvar{u} \in \mathbb{R}^m: \eqref{eq:no slip constraint} \ \text{holds} \}$. This result is summarized in the following lemma:

\begin{lemma}\label{lemma:contact force invariance}
Under Assumptions \ref{asm:square Jh}-\ref{asm:bounded approximations}, there exists a $\varepsilon^* \in \mathbb{R}_{\geq 0}$ such that if $\myvar{u}_k$ satisfies \eqref{eq:no slip constraint} for $\varepsilon > \varepsilon^*$, then $\ubscr{C}_f$ is forward invariant.
\end{lemma}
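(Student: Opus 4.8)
The plan is to reduce Lemma~\ref{lemma:contact force invariance} to a direct application of Nagumo's theorem on the set $\ubscr{C}_f$ defined by the strict inequality $\Lambda(\apscalar{\mu}) R_{cp} \myvar{f}_c > 0$, exactly as the no-slip constraint~\eqref{eq:no slip constraint} was constructed for this purpose. First I would record that on $\mathscr{H}$ the matrix $\apmat{B}_{ho}$ is invertible (by Assumptions~\ref{asm:square Jh}, \ref{asm:full rank G}) and that the true contact force decomposes as $\myvar{f}_c = \apvar{f}_c + \Delta(\myvar{f}_c)$ with $\apvar{f}_c$ given in closed form by~\eqref{eq:contact force approximation}; substituting this decomposition together with $\apscalar{\mu} \le \mu$ into the pyramidal no-slip condition~\eqref{eq:no slip condition} shows that the true condition $\Lambda(\mu) R_{cp} \myvar{f}_c > 0$ is implied by~\eqref{eq:no slip condition with Delta fc}.

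Next I would use Assumption~\ref{asm:bounded approximations} to extract a uniform bound: there exists $\varepsilon^* \in \mathbb{R}_{\geq 0}$ with $\|\Lambda(\apscalar{\mu}) R_{cp} \Delta(\myvar{f}_c)\|_\infty \le \varepsilon^*$ over the (compact-closure) relevant region of $\mathscr{H}$. I would then argue that any $\myvar{u}_k$ satisfying~\eqref{eq:no slip constraint} for $\varepsilon > \varepsilon^*$ makes the strict inequality~\eqref{eq:no slip condition with Delta fc} hold: subtracting the $\myvar{1}\varepsilon$ margin and adding back $-\Lambda(\apscalar{\mu}) R_{cp}\Delta(\myvar{f}_c)$ leaves a strictly positive slack $\myvar{1}(\varepsilon - \varepsilon^*) > 0$. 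Hence each component of $\Lambda(\mu) R_{cp} \myvar{f}_c$ is strictly positive whenever the state lies on $\partial\ubscr{C}_f$, i.e.\ the velocity of any boundary-defining function $\myvar{e}_j \Lambda(\mu) R_{cp} \myvar{f}_c$ is bounded away from zero on the boundary, which is precisely the sub-tangentiality (Nagumo) condition for forward invariance of $\ubscr{C}_f$.

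To close the argument I would invoke Nagumo's theorem (as cited via \cite{Blanchini2015} in the proof of Theorem~\ref{thm:sampled data zcbf}) on each of the $l_s n$ half-space constraints comprising $\ubscr{C}_f$, noting that the relevant functions are smooth by Assumption~\ref{asm:smooth surfaces} and that $\myvar{f}_c$ depends continuously (indeed smoothly) on the state along solutions of the sampled-data system~\eqref{eq:sampled data system} with the piecewise-constant $\myvar{u}_k$; existence and uniqueness of those solutions follows exactly as in Theorem~\ref{thm:sampled data zcbf}. Taking the intersection over all $l_s n$ constraints, and observing $\myvar{x}(0)\in\mathscr{H}\subseteq\ubscr{C}_f$, yields forward invariance of $\ubscr{C}_f$.

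The main obstacle I anticipate is the regularity/continuity bookkeeping rather than any deep estimate: one must ensure $\apvar{f}_c$ (and hence $\Delta(\myvar{f}_c)$) is well defined and continuous along the closed-loop trajectory, which requires $\apmat{B}_{ho}^{-1}$ to exist — valid only while the state remains in $\mathscr{H}$ where $\apmat{J}_h$ is full rank. Strictly this is a mild circularity (invariance of $\ubscr{C}_f$ is argued using a bound that presumes the state is still in $\mathscr{H}$), so I would phrase the conclusion as forward invariance of $\ubscr{C}_f$ \emph{relative to} $\mathscr{H}$, deferring the joint-angle and rolling constraints — which keep the state in the non-singular region — to the subsequent lemmas, so that the full set $\mathscr{H}$ is rendered invariant only once all three constraint sets are combined in the final controller.
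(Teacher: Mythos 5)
Your first two paragraphs reproduce what is in fact the paper's entire argument (the paper states the lemma as a summary of the derivation of \eqref{eq:no slip condition with Delta fc}--\eqref{eq:no slip constraint} and gives no separate proof): substitute $\myvar{f}_c = \apvar{f}_c + \Delta(\myvar{f}_c)$ and $\apscalar{\mu}\le\mu$ into \eqref{eq:no slip condition}, bound $\Lambda(\apscalar{\mu})R_{cp}\Delta(\myvar{f}_c)$ by $\varepsilon^*$ using Assumption \ref{asm:bounded approximations}, and conclude that \eqref{eq:no slip constraint} with $\varepsilon>\varepsilon^*$ implies \eqref{eq:no slip condition with Delta fc} and hence \eqref{eq:no slip condition}. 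That part is correct and is exactly the paper's route.

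Your third paragraph, however, is a detour that is both unnecessary and internally inconsistent. The friction-cone constraint is relative degree zero in the input: $\myvar{f}_c$ is an \emph{algebraic} function of the state and $\myvar{u}_k$ via \eqref{eq:contact force}, not a state with its own flow, so ``forward invariance of $\ubscr{C}_f$'' here means only that $\Lambda(\mu)R_{cp}\myvar{f}_c(t)>0$ holds pointwise in $t$ --- which your second paragraph already delivers. There is no sub-tangentiality condition to verify, and your statement that ``each component of $\Lambda(\mu)R_{cp}\myvar{f}_c$ is strictly positive whenever the state lies on $\partial\ubscr{C}_f$'' is self-contradictory, since on $\partial\ubscr{C}_f$ some component is by definition zero; you are conflating the value of the constraint function with its time derivative. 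The one genuine issue you leave unaddressed is inter-sample behavior: \eqref{eq:no slip constraint} is imposed only at $t=kT$, and continuity of $\myvar{f}_c$ along the trajectory does not by itself bound its excursion over $[kT,(k+1)T)$. The paper absorbs this into Assumption \ref{asm:bounded friction} via the conservative coefficient $\mu(T)$ from \cite{ShawCortez2018b}, and your proof should cite that rather than Nagumo. Your closing remark on the mild circularity (the bound on $\Delta(\myvar{f}_c)$ presupposes the state stays in $\mathscr{H}$ where $\apmat{B}_{ho}^{-1}$ exists) is a fair observation that the paper itself glosses over, and phrasing the conclusion as invariance relative to $\mathscr{H}$ is a reasonable fix.
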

\begin{remark}
As discussed in \cite{ShawCortez2018b}, slip can occur if perturbations sufficiently increase the ratio of tangential to normal forces with respect to the contact surface. The use of $\varepsilon$ in \eqref{eq:no slip constraint} enforces a lower bound of $\varepsilon / \apscalar{\mu}$ on the normal force. Essentially this ensures the hand is squeezing the object hard enough to resist such disturbances. Also, the use of $\varepsilon$ also ensures robustness to sampling time effects \cite{ShawCortez2018b}.
\end{remark}

\subsubsection*{Joint Angle Constraint}

The zeroing control barrier functions from Section \ref{sec:control barrier fcns} are used here to guarantee that the hand joints remain inside a desired joint space to prevent over-extension. First, robustness margins are incorporated into the functions $h_q$, and a conservative $\ubscr{C}_q$ is defined:
\begin{equation}\label{eq:joint angle limits robust}
 \begin{split}
 \robscalar{h}_{q\text{max}_j}(\myvar{q}) &= h_{q\text{max}_j}(\myvar{q}) - \delta_{q\text{max}_j}, \forall j\in [1,m] \\
 \robscalar{h}_{q\text{min}_j}(\myvar{q}) &=  h_{q\text{min}_j}(\myvar{q}) -\delta_{q\text{min}_j}, \forall j\in [1,m]
 \end{split}
\end{equation} 
\begin{equation}\label{eq:Cq set robust}
\hat{\ubscr{C}}_q = \{ \myvar{q} \in \mathbb{R}^m: \forall j \in [1,m]: \apscalar{h}_{q\text{max}_j} \geq 0, \apscalar{h}_{q\text{min}_j} \geq 0 \}
\end{equation}
where $\delta_{q\text{max}_j}, \delta_{q\text{max}_j} \in \mathbb{R}_{\geq 0}$ are the robustness margins. 

Similarly, the following zeroing control barrier functions are defined with robustness margins to prevent over-extension:
\begin{equation} \label{eq:joint angle zcbfs}
\begin{split}
\robscalar{B}_{q\text{max}_j}(\myvar{q}, \myvardot{q}) &= \hdot{h}_{q\text{max}_j} + \alpha_1(\hat{h}_{q\text{max}_j}) - \beta_{q\text{max}_j}, \ j \in [1,m] \\
\robscalar{B}_{q\text{min}_j}(\myvar{q}, \myvardot{q}) &= \hdot{h}_{q\text{min}_j} + \alpha_1(\hat{h}_{q\text{min}_j}) - \beta_{q\text{min}_j}, \  j \in [1,m] 
\end{split}
\end{equation}
where $\alpha_1(h)$ is a twice continuously differentiable, extended class-$\mathcal{K}$ function, and $\beta_{q\text{max}_j}, \beta_{q\text{min}_j} \in \mathbb{R}_{\geq 0}$ define the robustness margins. Let $\robubset{B}_q$ be defined by:
\begin{equation}\label{eq:B set joint angles}
\robubset{B}_q = \{ (\myvar{q}, \myvardot{q}) \in \mathbb{R}^{2m}: \forall j \in [1,m]:  \robscalar{B}_{q\text{max}_j} \geq 0,\\
 \robscalar{B}_{q\text{min}_j} \geq 0  \}
\end{equation}

The zeroing control barrier functions are applied to the dynamics of $\myvar{q}$ where the control input appears, namely $\bmdd{q}$. However \eqref{eq:hand dynamics} does not fully represent the effect of $\myvar{u}$ as the contact forces, $\myvar{f}_c$, are also dependent on $\myvar{u}_k$ as shown in \eqref{eq:contact force}. Substitution of \eqref{eq:contact force} in \eqref{eq:hand dynamics} define the proper system dynamics for the constraint set $\ubscr{C}_q$, which is omitted here for brevity.

Following Theorem \ref{thm:sampled data zcbf}, the following constraint is defined to ensure forward invariance of $\ubscr{C}_q$:
\begin{equation}\label{eq:joint position constraint}
\apmat{A}_q \myvar{u}_k \geq \apvar{b}_q
\end{equation} 
where $\apmat{A}_q \in \mathbb{R}^{2m \times m}$ and $ \apvar{b}_q \in \mathbb{R}^{2m}$ are defined in the Appendix. 

Let the set of admissible control torques for $\robubset{C}_q$ be $\robubset{S}_{u_q} = \{ \myvar{u}_k \in \mathbb{R}^m: k \in [1,N]: \apmat{A}_q \myvar{u}_k \geq \apvar{b}_q + \apscalar{\nu}_q \myvar{1}\}$, for the sampling time margin $\apscalar{\nu}_q \in \mathbb{R}_{\geq0}$.

\subsubsection*{Contact Location Constraint}

The zeroing control barrier functions are also used to ensure the contact points remain in the fingertip workspace. Robustness margins are incorporated into $h_{r_l}$ and used to define the conservative $\mathscr{C}_r$:
\begin{equation}\label{eq:jcontact location limits robust}
 \begin{split}
\hat{h}_{r_1}(\myvar{\xi}_{cf_i}) & = h_{r_1}(\myvar{\xi}_{cf_i}) - \delta_{r_1}, \ \hat{h}_{r_2}(\myvar{\xi}_{cf_i}) =  h_{r_2}(\myvar{\xi}_{cf_i}) - \delta_{r_2} \\ 
\hat{h}_{r_3}(\myvar{\xi}_{cf_i}) & =   h_{r_3}(\myvar{\xi}_{cf_i})  - \delta_{r_3}, \ \hat{h}_{r_4}(\myvar{\xi}_{cf_i}) =  h_{r_4}(\myvar{\xi}_{cf_i})   - \delta_{r_4}
 \end{split}
\end{equation} 
\begin{equation}\label{eq:Cr set robust}
\hat{\ubscr{C}}_r = \{ \myvar{\xi}_{cf} \in \mathbb{R}^m: \forall l \in [1,4]: \forall i \in [1,n]: \apscalar{h}_{r_l}(\myvar{\xi}_{cf_i}) \geq 0 \}
\end{equation}
where $\delta_{r_l} \in \mathbb{R}_{\geq0}$ define the robustness margins for $l \in [1,4]$.

Let the robust zeroing control barrier functions to prevent excessive rolling be defined by:
\begin{equation}\label{eq:control barrier function}
\hat{B}_{r_l} ( \myvar{\xi}_{cf_i}, \myvardot{\xi}_{cf_i})  = \hdot{h}_{r_l}(\myvar{\xi}_{cf_i}) + \alpha_1(\hat{h}_{r_l}) - \beta_{r_l},  \  l \in [1,4]
\end{equation}
where $\beta_{r_l} \in \mathbb{R}_{\geq0}$ defines the robustness margins for $l \in [1,4]$. Let $\hat{\ubscr{B}_r}$ be defined by:
\begin{multline}
\hat{\ubscr{B}}_r = \{ ( \myvar{\xi}_{cf}, \myvardot{\xi}_{cf}) \in \mathbb{R}^{4n}: \forall l \in [1,4]: \forall i \in [1,n]: \\
 \hat{B}_{r_l}( \myvar{\xi}_{cf_i}, \myvardot{\xi}_{cf_i}) \geq 0\}
\end{multline}

The zeroing control barrier functions are applied to the dynamics of $\myvar{\xi}_{cf_i}$ where the control input appears, namely $\bmdd{\xi}_{cf_i}$. The derivation of $\bmdd{\xi}_{cf_i}$ is quite involved and is broken down into the following steps. For ease of notation let $H_i := H(\myvar{\xi}_{cf_i}, \myvar{\xi}_{co_i}, \psi_i)$ be defined by:
\begin{equation}
H_i = M_{cf_i}^{-1} ( K_{cf_i} + R_{\psi_i} K_{co_i} R_{\psi_i} )^{-1}  \bracketmat{ccc}{0 & -1 & 0 \\ 1 & 0 & 0}
\end{equation} 
First, $J_{s_i} E_i \myvardot{q}$ is substituted for $\myvar{\omega}_{f_i}$ in \eqref{eq:contact kinematics}, which is then differentiated, resulting in :
\begin{multline}\label{eq:contact kinematic double derivative1}
\bmdd{\xi}_{cf_i} = \Big( \dot{H_i} R_{c_i p} + H_i \dot{R}_{c_i p} \Big) (\myvar{\omega}_{f_i} - \myvar{\omega}_o)\\
+ H_i R_{c_i p}\bracketmat{cc}{0_{3\times 3} & I_{3\times 3}} \Big( \dot{J}_{s_i} \myvardot{q}_i + J_{s_i} E_i \bmdd{q}  - \bmdd{x}_o    \Big)
\end{multline}
Further substitution of \eqref{eq:hand dynamics} and \eqref{eq:object dynamics} into \eqref{eq:contact kinematic double derivative1} results in:
\begin{multline}\label{eq:contact kinematic double derivative2}
\bmdd{\xi}_{cf_i} = \Big( \dot{H}_i R_{c_i p} + H_i \dot{R}_{c_i p} \Big) (\myvar{\omega}_{f_i} - \myvar{\omega}_o) \\
+ H_i R_{c_i p}\bracketmat{cc}{0_{3\times 3} & I_{3\times 3}}  \Big( \dot{J}_{s_i} \myvardot{q}_i 
+ J_{s_i} E_i M_h^{-1}( -C_h \myvardot{q} \\ 
- J_h^T \myvar{f}_c + \myvar{\tau}_e + \myvar{u}_k  - M_o^{-1} ( -C_o \myvardot{x}_o + G \myvar{f}_c + \myvar{w}_e)    \Big)
\end{multline}
Finally, the contact force \eqref{eq:contact force} is then substituted into \eqref{eq:contact kinematic double derivative2}, which is omitted here for brevity. Following the approach from Theorem \ref{thm:sampled data zcbf}, the following condition must be satisfied to ensure forward invariance of $\ubscr{C}_r$:
\begin{equation}\label{eq:Linear u constraint r}
\apmat{A}_r \myvar{u}_k \geq \apvar{b}_r 
\end{equation}
where $\apmat{A}_r \in \mathbb{R}^{4n \times m}$, $\apvar{b}_r \in \mathbb{R}^{4n}$ are defined in the Appendix. Let the set of admissible control torques for ensuring the contacts remain inside the fingertip workspace be $\robubset{S}_{u_r} = \{ \myvar{u}_k \in \mathbb{R}^m :k \in [0, N]: \apmat{A}_r \myvar{u}_k \geq \apvar{b}_r + \apscalar{\nu}_r \myvar{1} \}$, for the sampling time margin $\apscalar{\nu}_r \in \mathbb{R}_{\geq0}$.

\subsubsection*{Actuator Constraints}

Finally, actuators are limited to a finite actuation range in real-life applications. To ensure the proposed controller is implementable on real systems the following actuator constraint is defined:
\begin{equation}\label{eq:actuator constraint}
\myvar{u}_{\text{min}} \leq \myvar{u}_k\leq  \myvar{u}_{\text{max}}
\end{equation}
where $\myvar{u}_{\text{min}}, \myvar{u}_{\text{max}} \in \mathbb{R}^m$ denote the minimum and maximum allowable torque values, respectively. Let the set of bounded control torques be $\ubscr{S}_{u_\tau} = \{ \myvar{u}_k \in \mathbb{R}^m: k \in [0, N]: \myvar{u}_{\text{min}} \leq \myvar{u}_k \leq  \myvar{u}_{\text{max}} \}$.

\subsubsection*{Proposed Control}

Thus far, the constraints  \eqref{eq:no slip constraint},  \eqref{eq:joint position constraint}, and \eqref{eq:Linear u constraint r} have been presented to prevent slip, joint over-extension, and excessive rolling, respectively. The proposed control admits a nominal manipulation controller, $\myvar{u}_{\text{nom}} \in \mathbb{R}^m$, and outputs a control torque that minimizes $||\myvar{u} - \myvar{u}_{\text{nom}} ||^2$, while adhering to the grasp constraints \eqref{eq:no slip constraint}, \eqref{eq:joint position constraint}, \eqref{eq:Linear u constraint r}, and \eqref{eq:actuator constraint}. The proposed control is:
\begin{align} \label{eq:safe control qp sampled data}
\begin{split}
\myvar{u}_k^* \hspace{0.1cm} = \hspace{0.1cm} & \underset{\myvar{u}}{\text{argmin}}
\hspace{.1cm} \myvar{u}^T \myvar{u}  - 2 \myvar{u}_{\text{nom}_k}^T \myvar{u}  \\
& \text{s.t.} 
\hspace{0.6cm} \apmat{A}_k \myvar{u} \geq \apvar{b}_k + \apscalar{\nu}_h \myvar{1} \\  
& \hspace{0.6cm} \myvar{u}_{\text{min}} \leq \myvar{u}\leq  \myvar{u}_{\text{max}}
\end{split}
\end{align}
where $\apmat{A}_k$ and $\apvar{b}_k$ are the concatenations of the constraints \eqref{eq:no slip constraint}, \eqref{eq:joint position constraint}, and \eqref{eq:Linear u constraint r} evaluated at the sampling time $t = kT$, and $\apscalar{\nu}_h = \text{max}\{\apscalar{\nu}_q, \apscalar{\nu}_r \}$

Let the set of grasp constraint admissible control torques be:
\begin{equation}\label{eq:admissible control torques}
\robubset{S}_u = \robubset{S}_{u_f} \bigcap \robubset{S}_{u_q} \bigcap \robubset{S}_{u_r} \bigcap \ubscr{S}_{u_\tau}
\end{equation}
The following assumption is made to ensure that a solution to \eqref{eq:admissible control torques} exists as per Theorem \ref{thm:consat sampled data qp exact}:
\begin{assumption}\label{asm:safe control torques nonempty}
The set of grasp constraint admissible control torques $\robubset{S}_u$ is non-empty.
\end{assumption}

The following theorem guarantees forward invariance of $\mathscr{H}$ despite model uncertainties. For ease of notation, let $\myvar{c}$ be the concatenation of the robustness margin terms $\delta_{q\text{min}_j}$,$ \delta_{q\text{max}_j}$,$ \delta_{r_l}$, $\beta_{q\text{min}_j}$,$ \beta_{q\text{max}_j}$,$ \beta_{r_l}$ for all $j\in [1,m], l \in [1,4]$. 

\begin{thm}\label{thm:grasp consat robust}
Suppose Assumptions \ref{asm:square Jh}-\ref{asm:safe control torques nonempty} hold for the controllable system \eqref{eq:hand dynamics}. For twice continuously differentiable, extended class-$\mathcal{K}$ function $\alpha_1$, and locally Lipschitz extended class-$\mathcal{K}$ function $\alpha_2$, suppose that $\ubscr{C}_f$, $\robubset{B}_q \bigcap \robubset{C}_q$, $\robubset{B}_r \bigcap \robubset{C}_r$ are non-empty, and $\myvar{f}_c(0) \in \ubscr{C}_f$, $ (\myvar{q}(0), \myvardot{q}(0)) \in \robubset{B}_q \bigcap \robubset{C}_q$, $ \myvar{\xi}_f(0)$, $\myvardot{\xi}_f(0)) \in \robubset{B}_r \bigcap \robubset{C}_r$. Then there exists $\apscalar{\nu}_h$, $\varepsilon$, $\myvar{c}$ such that $\myvar{u}_k^*$ from \eqref{eq:safe control qp sampled data} applied to \eqref{eq:hand dynamics} ensures $(\myvar{f}_c, \myvar{q}, \myvar{\xi}_{cf})$ remains in $\mathscr{H}$ for $t \in [0, NT)$.
\end{thm}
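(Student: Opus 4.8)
The plan is to prove forward invariance of $\mathscr{H} = \ubscr{C}_f \times \ubscr{C}_q \times \ubscr{C}_r$ componentwise: I would show that $\myvar{u}_k^*$ from \eqref{eq:safe control qp sampled data} simultaneously keeps $\myvar{f}_c$ in $\ubscr{C}_f$, keeps $(\myvar{q},\myvardot{q})$ in $\robubset{B}_q \cap \robubset{C}_q$, and keeps $(\myvar{\xi}_{cf},\myvardot{\xi}_{cf})$ in $\robubset{B}_r \cap \robubset{C}_r$ on $[0,NT)$, and then invoke $\robubset{B}_q \cap \robubset{C}_q \subseteq \ubscr{C}_q$ and $\robubset{B}_r \cap \robubset{C}_r \subseteq \ubscr{C}_r$. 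First I would establish that the QP \eqref{eq:safe control qp sampled data} is feasible with a unique minimizer: feasibility follows from Assumption \ref{asm:safe control torques nonempty} (so $\robubset{S}_u$ is non-empty, hence the linear constraints $\apmat{A}_k\myvar{u} \geq \apvar{b}_k + \apscalar{\nu}_h\myvar{1}$ together with the box constraints are consistent), and uniqueness from strict convexity of the objective over the resulting polyhedron, exactly as in the proof of Theorem \ref{thm:consat sampled data qp exact}. By construction the minimizer satisfies every row of $\apmat{A}_k\myvar{u} \geq \apvar{b}_k + \apscalar{\nu}_h\myvar{1}$, so taking $\apscalar{\nu}_h = \max\{\apscalar{\nu}_q,\apscalar{\nu}_r\}$ at least as large as both individual sampling margins gives $\myvar{u}_k^* \in \robubset{S}_{u_f} \cap \robubset{S}_{u_q} \cap \robubset{S}_{u_r} \cap \ubscr{S}_{u_\tau} = \robubset{S}_u$.

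For the contact-force set I would apply Lemma \ref{lemma:contact force invariance} directly: the no-slip rows of $\apmat{A}_k\myvar{u} \geq \apvar{b}_k + \apscalar{\nu}_h\myvar{1}$ are precisely \eqref{eq:no slip constraint}, so for $\varepsilon > \varepsilon^*$ and $\myvar{f}_c(0) \in \ubscr{C}_f$ the set $\ubscr{C}_f$ is forward invariant. For the joint-angle and contact-location sets I would reduce to the sampled-data, relative-degree-two machinery of Theorems \ref{thm:sampled data zcbf}--\ref{thm:consat sampled data qp exact}. Substituting the contact-force expression \eqref{eq:contact force} into the hand dynamics \eqref{eq:hand dynamics} (and into \eqref{eq:contact kinematic double derivative2} for $\myvar{\xi}_{cf_i}$) yields closed-loop dynamics that are control-affine in $\myvar{u}_k$ of the form \eqref{eq:sampled data system}, in which the discrepancy between the true system and its approximation --- the terms built from $\Delta(M_o)$, $\Delta(M_h)$, $\Delta(J_h)$, $\Delta(G)$, and ultimately $\Delta(\myvar{f}_c)$ --- plays the role of the bounded disturbance $\myvar{d}$; Assumption \ref{asm:bounded approximations} supplies a bound $\myvarnorm{d}_\infty \leq \theta$. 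Since $\robscalar{B}_{q\text{max}_j}, \robscalar{B}_{q\text{min}_j}$ of \eqref{eq:joint angle zcbfs} and $\apscalar{B}_{r_l}$ of \eqref{eq:control barrier function} are built from the relative-degree-two functions $h_q$, $h_r$ via \eqref{eq:candidate zeroing cbf multiple}, and since $\myvar{u}_k^* \in \robubset{S}_{u_q} \cap \robubset{S}_{u_r}$, Theorem \ref{thm:consat sampled data qp exact} applies on each sampling interval; choosing the entries of $\myvar{c}$ (the $\beta$'s and $\delta$'s) large enough relative to $\theta$ and $\apscalar{\nu}_h$ large enough relative to the interval bound $\nu(T)$ of \eqref{eq:sample data margin} yields forward invariance of $\robubset{B}_q \cap \robubset{C}_q$ and of $\robubset{B}_r \cap \robubset{C}_r$ on $[0,NT)$, under the stated initial conditions. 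The three invariance statements then hold concurrently, so the concatenated state $(\myvar{f}_c,\myvar{q},\myvar{\xi}_{cf})$ remains in $\mathscr{H}$ on $[0,NT)$, which answers Problem \ref{pr:grasp constraint satisfaction}.

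The main obstacle, and the point needing the most care, is the circular coupling through $\myvar{f}_c$: the inverse $B_{ho}^{-1}$ (and $\apmat{B}_{ho}^{-1}$) appearing in both the closed-loop dynamics and the CBF constraints is guaranteed to exist only on $\mathscr{H}$ (Assumptions \ref{asm:square Jh}, \ref{asm:full rank G}), so the argument cannot be a single global invariance claim --- it must be run inductively over the sampling intervals $[kT,(k+1)T)$. At step $k$ one uses that the state lies in $\mathscr{H}$ at $kT$ (so the dynamics and the QP are well-posed over that interval), that closeness of solutions over one interval (Theorem 3.4 of \cite{Khalil2002}, exactly as in Theorem \ref{thm:sampled data zcbf}) controls the inter-sampling term, and that the resulting Nagumo-type CBF condition --- now valid because the sampling and robustness margins absorb both that term and $\myvar{d}$ --- forces the state to stay in $\mathscr{H}$ up to $(k+1)T$; the induction then closes. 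A secondary point is simply to verify that $\varepsilon$, $\apscalar{\nu}_h$, and the entries of $\myvar{c}$ can be chosen consistently: each of the three subproblems imposes only a lower bound, so taking the maximum of the required values suffices and no conflict arises.
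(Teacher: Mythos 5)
Your proposal is correct and follows essentially the same route as the paper's proof: decompose $\mathscr{H}$ into its three factors, invoke Lemma \ref{lemma:contact force invariance} for $\ubscr{C}_f$, and reduce the joint-angle and contact-location constraints to Theorem \ref{thm:consat sampled data qp exact} with the model-approximation errors playing the role of the bounded disturbance and Assumption \ref{asm:safe control torques nonempty} supplying QP feasibility. The one place you go beyond the paper is in making explicit the interval-by-interval induction needed because $B_{ho}^{-1}$ and the constraint matrices are only well defined on $\mathscr{H}$; the paper's proof leaves this well-posedness step implicit, so your treatment is, if anything, the more careful one.
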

\begin{proof}
By Assumptions \ref{asm:square Jh}-\ref{asm:geometric parameterization}, and for $(\myvar{f}_c, \myvar{q}, \myvar{\xi}_f) \in \mathscr{H}$, the constraints \eqref{eq:no slip constraint},  \eqref{eq:joint position constraint}, and \eqref{eq:Linear u constraint r} are well defined.  By Lemma \ref{lemma:contact force invariance}, there exists a $\varepsilon$ such that $\myvar{f}_c$ remains in $\ubscr{C}_f$ for all $t \in [0, NT)$. By Assumption \ref{asm:smooth surfaces} and construction of $h_{q_j}$, $h_{r_j}$, all terms in  \eqref{eq:no slip constraint},  \eqref{eq:joint position constraint}, \eqref{eq:Linear u constraint r}, \eqref{eq:actuator constraint} are locally Lipschitz continuous between sampling periods. Furthermore, by Assumption \ref{asm:safe control torques nonempty}, the constraint set is feasible for all $t\geq0$, and the sets $\ubscr{B}_q$, $\ubscr{C}_q$, $\ubscr{B}_r$, and $\ubscr{C}_r$ are compact by construction. Thus the conditions of Theorem \ref{thm:consat sampled data qp exact} are satisfied such that there exist $\apscalar{\nu}_h$, $\myvar{c}$, such that $\myvar{q}$ remains in $\ubscr{C}_q$, and $\myvar{\xi}_{cf}$ remains in $\ubscr{C}_r$ for all $t \in [0, NT)$, and the proof is complete.
\end{proof}

\section{Results} \label{sec: results}

The proposed zeroing control barrier functions have been presented with guarantees of forward invariance of the constraint set in the presence of sampling and external perturbations. The zeroing control barrier functions were then applied to address the challenging problem of robotic grasping to ensure that the object does not slip, the hand does not exceed joint limits, and the contact locations do not exceed the workspace of the fingertips. In this section, the proposed control is applied to robotic grasping and is implemented in simulation and hardware to demonstrate robust constraint satisfaction. 

Robust constraint satisfaction in this context refers to satisfaction of constraints with uncertain model parameters. This is to demonstrate the capability of the proposed method to a wide range of grasping scenarios in which intimate knowledge of the hand-object properties are unavailable. Here, tactile-based blind grasping is addressed in which the controller only has access to the contact location, $\myvar{p}_{fc}$, and joint angle positions/velocities, $\myvar{q}$, and $\myvardot{q}$. The remaining object parameters are approximated as follows. The approximate object center of mass, $\apvar{p}_o$ and orientation $\apvar{\gamma}_o$ are defined by the virtual frame \cite{ShawCortez2018b, Wimbock2012, Tahara2010}. The approximations $\apvar{p}_o$ and $\apvar{\gamma}_o$ are used to compute the approximations $\apmat{G}$, $\apmat{M}_o$, and $\apmat{C}_h$ terms. Nominal object parameters for object mass, $m_o$ and inertia $I_o$ are then arbitrarily defined, as will be discussed in the proceeding section.

The contact model is addressed by conservatively approximating the local object surface as flat. Note this approximation only holds if the fingertip surfaces are locally curved about the contact point. By approximating the object locally as flat, the contact kinematic parameters simplify to $\apmat{K}_{co_i} = 0_{2\times 2}, \apmat{M}_{co_i} = I_{2\times 2}, \apmat{T}_{co_i} = 0_{1\times 2}$. This not only  simplifies the computation of $\apmat{A}_r$, $\apvar{b}_r$, but also introduces robustness because flat object surfaces result in the largest contact point displacement for the same given rolling angular velocity.

\subsection{Simulation Results}\label{ssec:sim results}

The purpose of this simulation is to demonstrate constraint satisfaction via the proposed constraint satisfying controller. These results will demonstrate that the proposed control \eqref{eq:safe control qp sampled data} can ensure grasp constraint satisfaction for a given nominal control despite model uncertainties. 

The hand used in the simulations is a nine degree of freedom, fully-actuated hand with three identical fingers and hemispherical fingertips. The maximum control torque is $3.5$ Nm. The object being grasped is a cube, and the model parameters of the hand-object are found in Table \ref{table:Simulation parameters} with the initial hand-object configuration shown in Figure \ref{fig:initgrasp}.

The nominal manipulation control used here is the manipulation control presented from \cite{ShawCortez2018b} with the internal force control \cite{Kawamura2013} which is re-defined here as:
\begin{equation}\label{eq:nominal control tbbg}
\myvar{u}_{nom} = \apmat{J}_h^T  \Big( (P^T \apmat{G})^\dagger ( -K_p \myvar{e} - K_i \text{sat}(\int_0^t \myvar{e} \ dt)  - K_d \myvardot{e} )  + \myvar{u}_f \Big)
\end{equation}
\begin{equation} \label{eq:nominal internal force control tbbg}
\bm{u}_f = k_f (\bmb{p}_c - \bm{p}_{c_1},  \bmb{p}_c- \bm{p}_{c_2}, ...,  \bmb{p}_c - \bm{p}_{c_n} )
\end{equation}
\begin{equation}
\text{sat}(\myvar{x})_j = \left\{\begin{array}{lr}
        x_j, & \text{for } |x_j| \leq 3 \\
        3 \text{sign}(x_j), & \text{for } |x_j| > 3
        \end{array}\right\} 
\end{equation}
Note the saturation function is used to anticipate integrator wind-up should the proposed control $\myvar{u}_k^*$ from \eqref{eq:safe control qp sampled data} diverge from $\myvar{u}_{nom}$.  

Implementation of the contact location constraints requires appropriate parameterizations of the fingertip surface to satisfy Assumption \ref{asm:geometric parameterization}. The parameterization used here is $\myvar{c}_{f_i} = [-R \cos(a_{f_i}) \cos(b_{f_i}), R \sin(a_{f_i}), -R \cos(a_{f_i})\sin(b_{f_i})]^T$. The associated box constraints to define the fingertip workspace are: $-\pi/2 < a_{f_i} < \pi/2$, $-\pi < b_{f_i} < 0$. The joint angle limits for each finger are $\myvar{q}_{\text{max}_i} = (3 \pi /4, \pi/3, 3 \pi/4)$, $\myvar{q}_{\text{min}_i} = (0, -\pi/3, 0)$, for $ i \in [1,3]$. 

\begin{figure}[h!]
\centering
	\subcaptionbox{Isometric view. \label{fig:initgraspiso}}
		{\includegraphics[scale=.2]{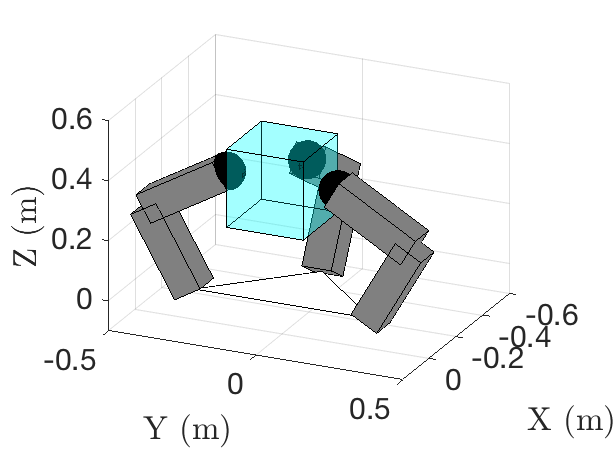}}
	\subcaptionbox{Top view. \label{fig:initgraspside} }
		{\includegraphics[scale=.2]{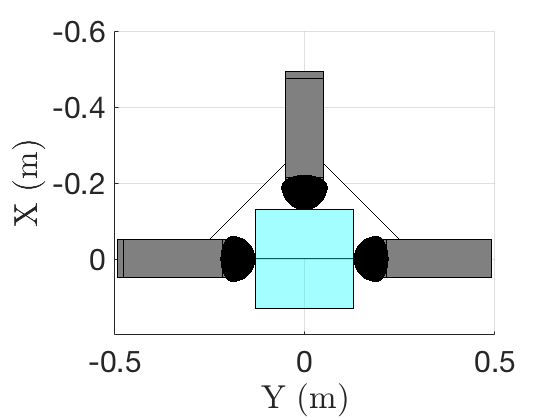}}
	\caption{Simulation setup with initial hand-object configuration.} \label{fig:initgrasp}
\end{figure}

\begin{table}[h!]\hspace*{-1cm}
\centering
\caption{Simulation Parameters} \label{table:Simulation parameters}
\begin{tabular}{c|c}    \toprule
Link dimensions    & $0.05 \ \text{m} \times 0.05 \ \text{m} \times 0.3\  \text{m}$  \\
Link mass & $0.25 \ \text{kg}$ \\ 
Link moment of inertia & $\text{diag}([0.0019,0.0001,0.0019]) \ \text{kg}\text{m}^2$  \\
Fingertip radius & $0.06 \ \text{m}$  \\
Object dimensions & $0.260 \ \text{m} \times 0.260 \ \text{m} \times 0.260 \ \text{m}$ \\
Object mass & $0.11 \ \text{kg}$\\
Object moment of inertia & $\text{diag}([0.0058, 0.0214, 0.0214])\ \text{kg} \text{m}^2$ \\
Friction coefficient & $\mu = 0.9$ \\
Initial $\myvar{p}_o$ & $[0.00,0.00,0.41] \ \text{m}$  \\
Initial $\myvar{\gamma}_o$ & $[0.0,0.0,0.0] \ \text{rad}$  \\\bottomrule
\end{tabular}
\end{table}

The control gains used in the simulation are $K_p = 0.26 I_{6\times 6}$, $K_i = 0.1 I_{6\times 6}$,  $K_d = 0.125 I_{6\times 6}$, and $k_f = 1.0$. The set-point object reference command is $\myvar{r} = \myvar{x}(0) +  (0,0,0,0,0,\pi/2)$, where $\myvar{x} \in \mathbb{R}^6$ is the task state defined by the virtual frame \cite{ShawCortez2018b}. The proposed constraint satisfying control \eqref{eq:safe control qp sampled data} is implemented with the following robustness margins: $\varepsilon = 0.03$, $\delta_{r_l} = 0.1$ rad, $\beta_{r_l} = 1.0$ rad/s, $\delta_{q_i} = 0.1$ rad, $\beta_{q_i} = 0.05$ rad/s. The extended class-$\mathcal{K}$ functions used in this simulation were $\alpha_1(h) = 2h$ and $\alpha_2(h) = h^3$. The approximate object model parameters are purposefully offset from the true mass parameters. The object mass error was set to $\Delta(m)_o = 0.1$ kg, inertia error set to $\Delta(I)_o = 0.001 I_{3\times 3} \ \text{kg m}^2$. The sampling time margin was set to $\apscalar{\nu}_h = 0.0001$. A four-sided pyramid was used to approximate the friction cone with an associated friction coefficient of $\apscalar{\mu} = 0.64$. The simulations were implemented in Matlab 2018 with a sampling time of $T = 0.003$ s.

Figure \ref{fig:exp_42_results} shows the results of the nominal control as it attempts to track the set-point reference command by twisting the object about the $Z$-axis. The plots show multiple constraint violations including slip, joint over-extension, and excessive rolling, which result in a failed grasp. At $t = 0.928$ s, the contact location $b_{cf_2}$ exceeded the fingertip surface and the simulation stopped. Figures \ref{fig:exp_42_trackingperf} and \ref{fig:fail_grasp_exp42} show the reference error for the $Z$ component of the orientation error and final grasp configuration. It is important to emphasize that the final orientation of $1.394$ rad shown in Figure \ref{fig:exp_42_trackingperf} is not feasible due to slip (see Figure \ref{fig:frictionexp42}), singular configurations (see Figure \ref{fig:joint_traj_exp42_q3} at $t = 0.461$ s) and joint limits that were exceeded (see Figure \ref{fig:joint_traj_exp42_q2} at $t = 0.498$ s) during the manipulation motion, prior to the excessive rolling that stopped the simulation. These results demonstrate that the conventional assumption that the grasp conditions hold are not valid and may result in grasp failure despite using a stable manipulation controller.

\begin{figure}[!t]
\centering
	\subcaptionbox{ \label{fig:contact_traj_exp42_bfi} }
		{\includegraphics[scale=.221]{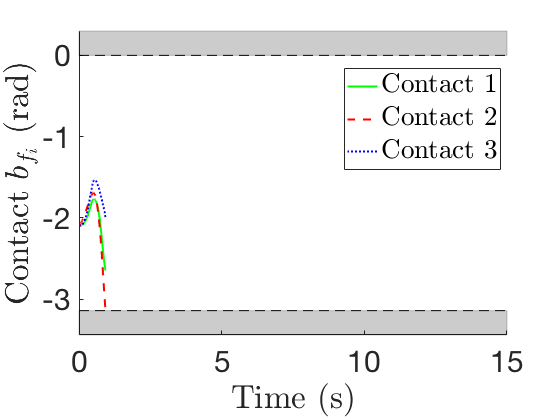}}
	\subcaptionbox{\label{fig:joint_traj_exp42_q2} }
		{\includegraphics[scale=.221]{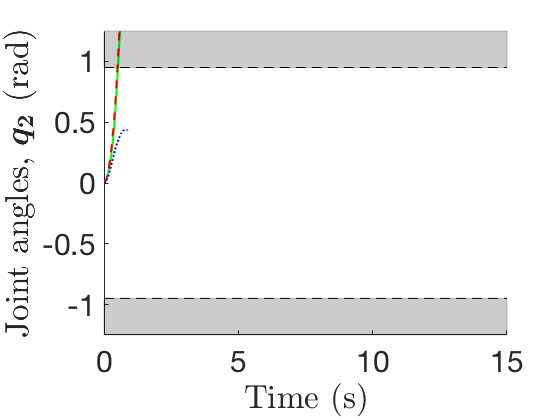}}
		\subcaptionbox{\label{fig:joint_traj_exp42_q3} }
		{\includegraphics[scale=.221]{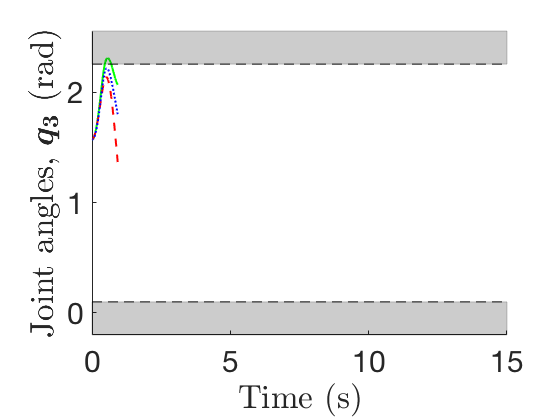}}
		\subcaptionbox{\label{fig:frictionexp42} }
		{\includegraphics[scale=.2]{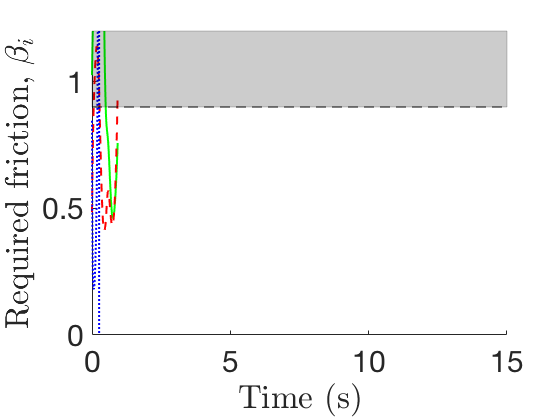}}
		\subcaptionbox{\label{fig:exp_42_trackingperf} }
		{\includegraphics[scale=.2]{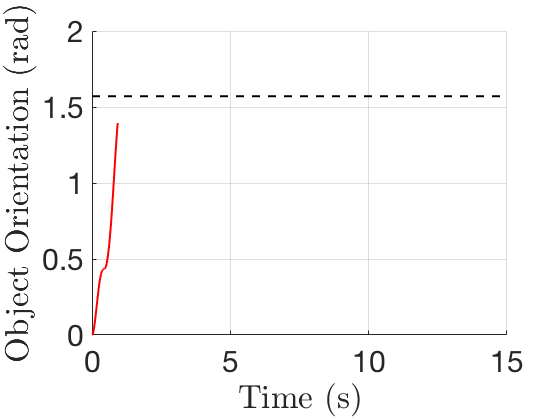}}
		\subcaptionbox{\label{fig:fail_grasp_exp42} }
		{\includegraphics[scale=.221]{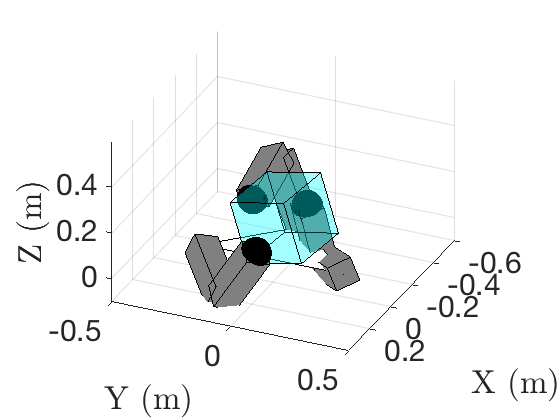}}
	\caption{Failed grasp for nominal without constraint satisfaction. The simulation is stopped when $\myvar{b}_{f_i}$ exceeds the constraint boundary.  Note, (e) shows the Z-component of object orientation. The black dashed line corresponds to the reference, $\myvar{r}$, and the red line corresponds to the state $\myvar{x}$.} \label{fig:exp_42_results}
\end{figure}

Figures \ref{fig:exp_41_results} shows the nominal control implemented with the proposed control \eqref{eq:safe control qp sampled data}. The resulting plots show that the proposed control is able to ensure the grasp states remain in $\mathscr{H}$ despite the model uncertainty from the tactile-based blind grasping implementation. Figure \ref{fig:exp_41_trackingperf} shows the resulting tracking error of the proposed control. To prevent grasp failure, the proposed control prevents the hand-object system from reaching the infeasible reference command. 
%

\begin{figure}[hbtp]
\centering
	\subcaptionbox{ \label{fig:contact_traj_exp41_bfi} }
		{\includegraphics[scale=.221]{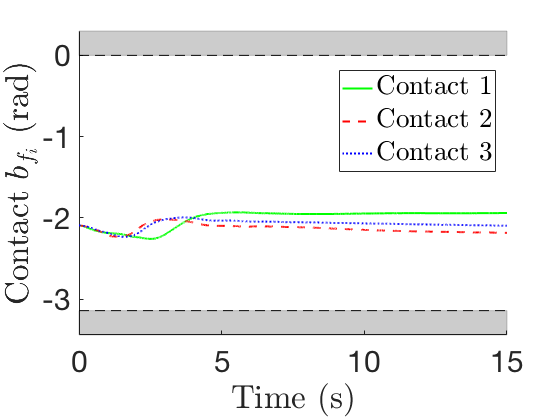}}
	\subcaptionbox{\label{fig:joint_traj_exp41_q2} }
		{\includegraphics[scale=.221]{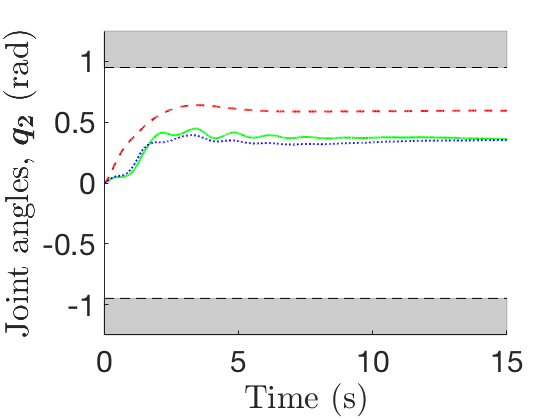}}
		\subcaptionbox{\label{fig:joint_traj_exp41_q3} }
		{\includegraphics[scale=.221]{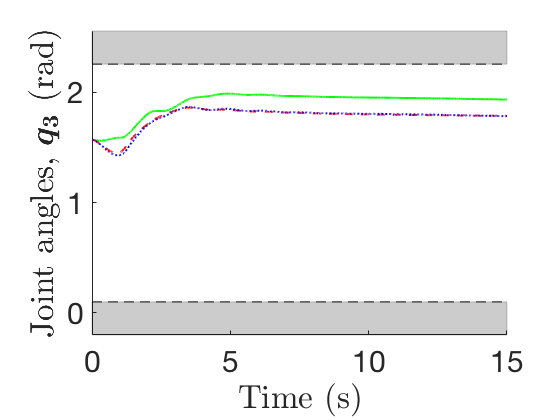}}
		\subcaptionbox{\label{fig:frictionexp41} }
		{\includegraphics[scale=.20]{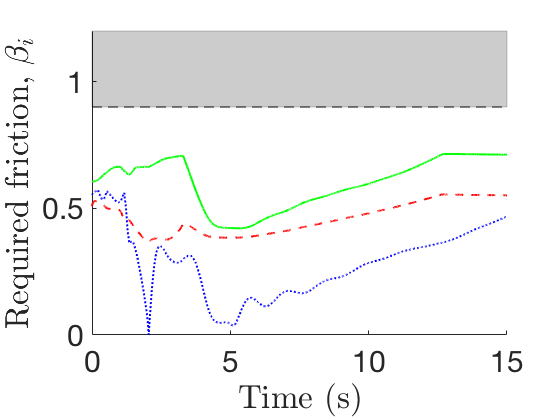}}
		\subcaptionbox{\label{fig:exp_41_trackingperf} }
		{\includegraphics[scale=.221]{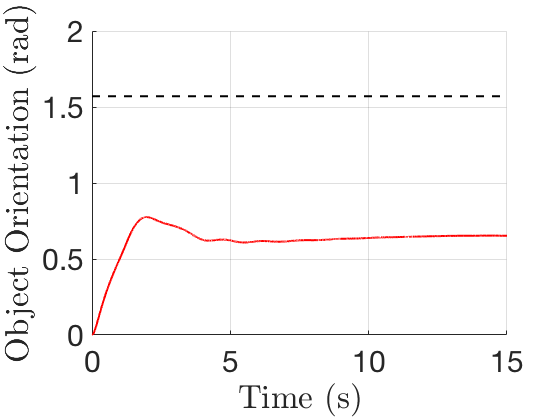}}
		\subcaptionbox{Final grasp configuration. \label{fig:final_grasp_exp41} }
		{\includegraphics[scale=.221]{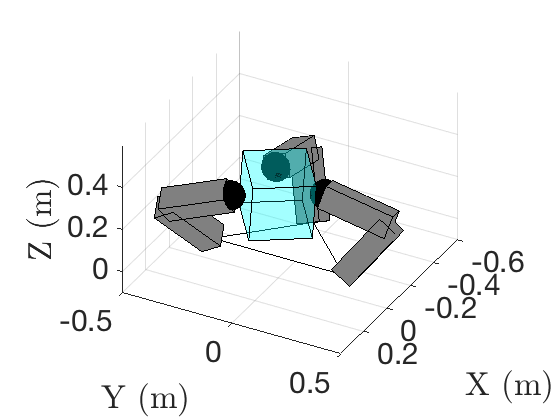}}
	\caption{Successful grasp for constraint satisfying controller. Note, (e) shows the Z-component of object orientation. The black dashed line corresponds to the reference, $\myvar{r}$, and the red line corresponds to the state $\myvar{x}$. } \label{fig:exp_41_results}
\end{figure}

\begin{remark}
One important note is that the proposed control is effectively a disturbance on the nominal control input when the proposed control must intervene to satisfy the grasp constraints. Although the proposed control ensures grasp constraint satisfaction, there is yet no analysis of how the nominal control will behave. For implementation, it may be advantageous for the nominal controller to be passive to avoid undesired manipulation motion. Fortunately, many tactile-based blind grasping controllers are passivity-based \cite{Ozawa2017}. Further analysis is required to address passivity of the proposed control \eqref{eq:safe control qp sampled data}.
\end{remark}

 \subsection{Hardware Results}

The purpose of the hardware implementation is to demonstrate the effectiveness of the proposed control in practice. This is done by performing three demonstrations in which the nominal manipulation control used in Section \ref{ssec:sim results} is implemented along with the proposed control \eqref{eq:safe control qp sampled data}. In the first demonstration, a feasible reference is provided to the proposed control to show that when no constraint violation occurs, the proposed control admits the original nominal control law. In the second demonstration, a compromising reference is provided to the nominal control to show that when the grasp constraints are not formally addressed, instabilities may occur in the hand-object system that result in grasp failure. In the final demonstration, the \textit{same} compromising reference is provided to the proposed control to show how the proposed method ensures constraint satisfaction. 

\begin{figure}[hbtp]
\centering
	\subcaptionbox{Top View \label{fig.Allegrohandsetup1} }
		{\includegraphics[scale=.23]{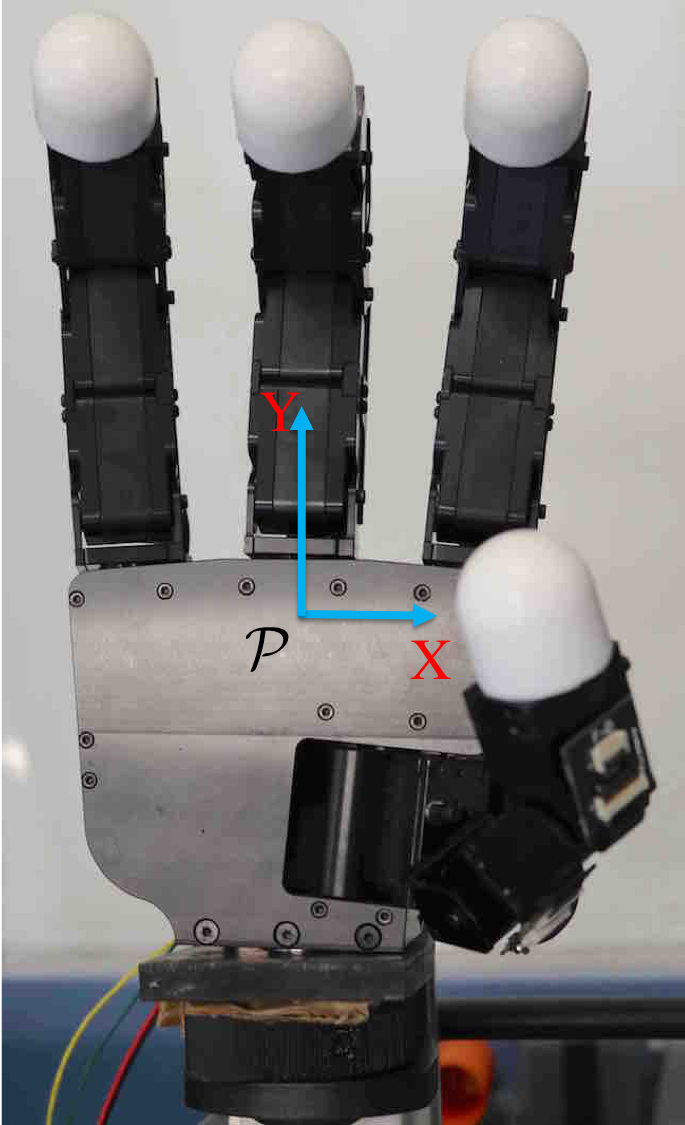}}
	\subcaptionbox{ Side View \label{fig.Allegrohandsetup2} }
		{\includegraphics[scale=.237]{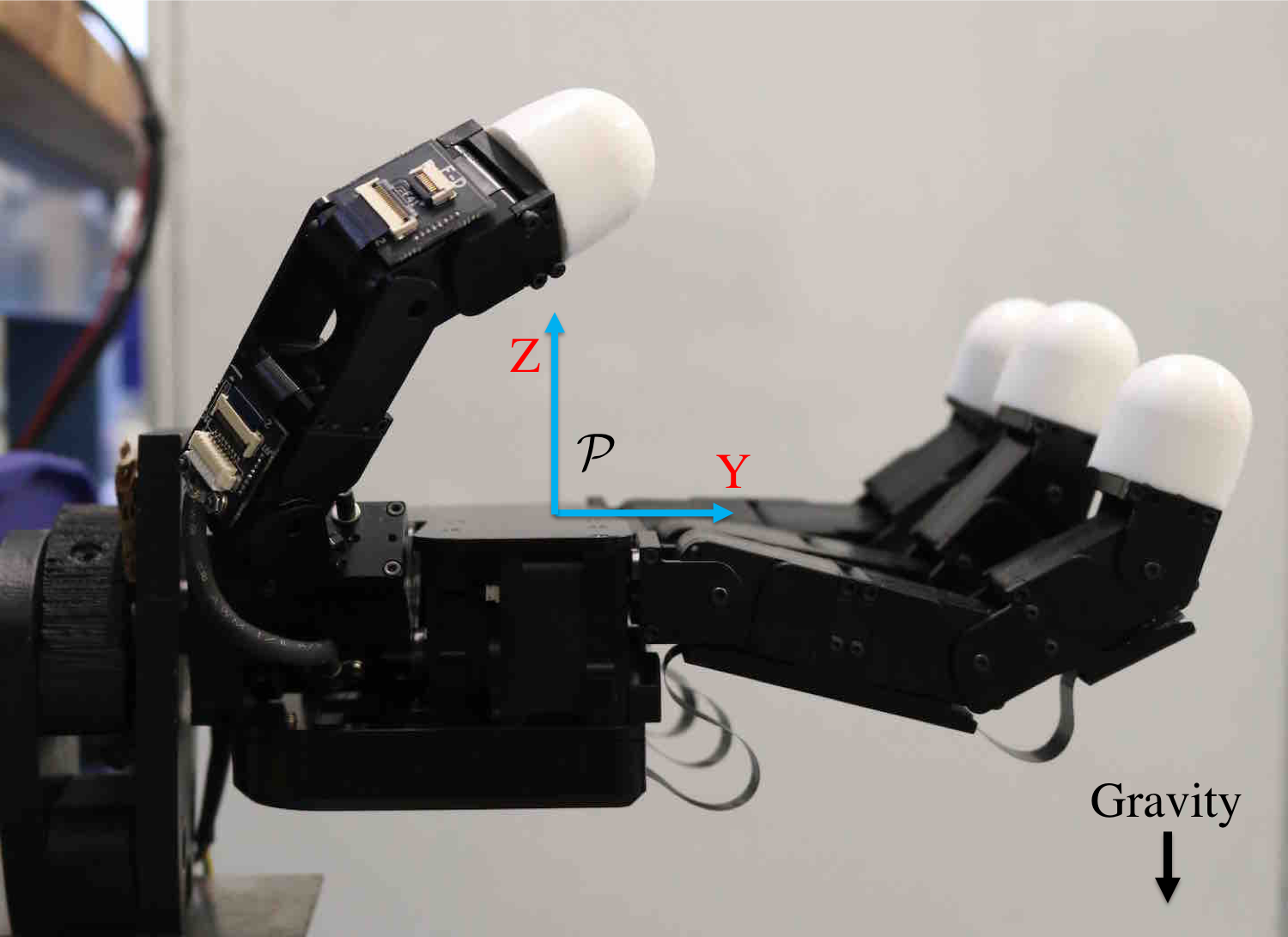}}
	\caption{Allegro Hand setup.}  \label{fig.AllegroHandsetup}
\end{figure}

\begin{table}[htbp]\hspace*{-1cm}
\centering
\caption{Allegro Hand: Model Parameters} \label{table:allegro hand parameters}
\begin{tabular}{c|ccc}    \toprule
 \textbf{Dimensions} (m) & \emph{Link 1} & \emph{Link 2} & \emph{Link 3} \\\midrule
Length (index, middle, ring)    & $0.0540$ & $0.0384$ &  $0.0250$  \\
Length (thumb)   & $ 0.0554 $ & $0.0514 $ & $0.0400$  \\
Width/Depth (all) & $0.0196$ & $0.0196$ & $0.0196$\\
\midrule
\textbf{Mass} (kg) & \emph{Link 1} & \emph{Link 2} & \emph{Link 3} \\\midrule
(index, middle, ring) & $0.0444 $ & $0.0325$ & $0.0619$  \\
 (thumb) & $0.0176$ & $0.0499$ & $0.0556$  \\
\bottomrule
\end{tabular}
\end{table}
 
\begin{figure}[ht]
\centering
	\subcaptionbox{Initial configuration \label{fig:demo 1 initial config} }
		{\includegraphics[scale=.292]{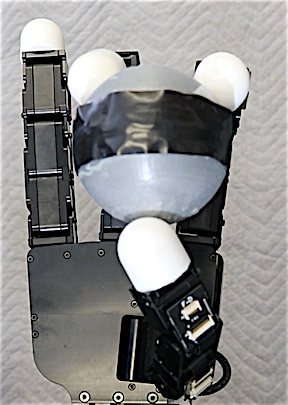}}
	\subcaptionbox{ Final configuration \label{fig:demo 1 final config} }
		{\includegraphics[scale=.281]{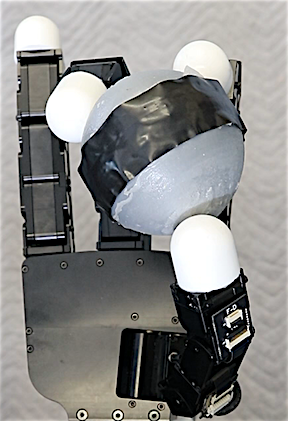}}
	\subcaptionbox{Orientation tracking.  \label{fig:demo 1_orientation} }
		{\includegraphics[scale=.2]{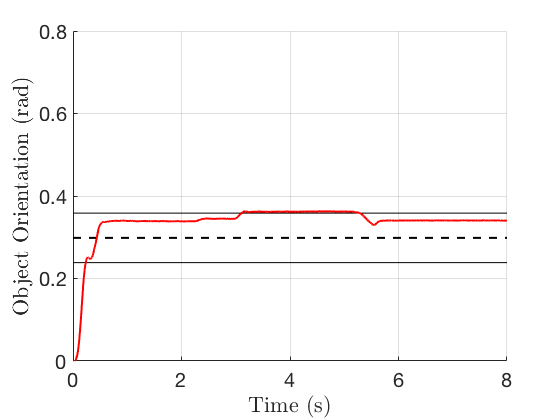}}
	\subcaptionbox{Nominal control torque \label{fig:demo 1_nom torque} }
		{\includegraphics[scale=.2]{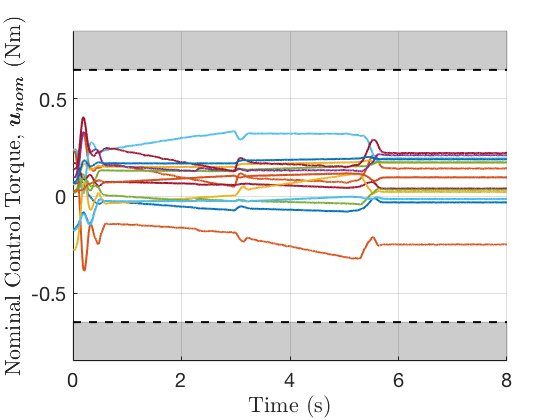}}
		\subcaptionbox{Proposed control torque \label{fig:demo 1_consat torque} }
		{\includegraphics[scale=.2]{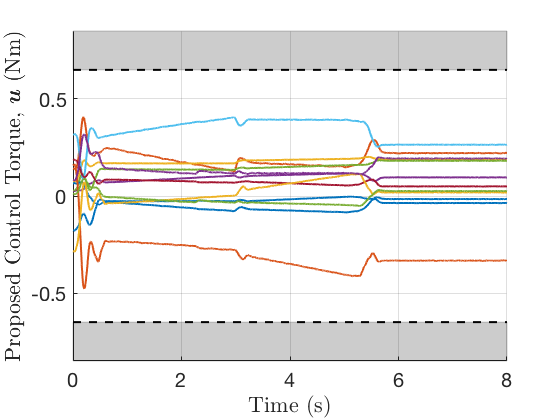}}
	\caption{Demonstration 1: Proposed control with feasible reference $r_\psi = 0.3$ rad.}  \label{fig:demo 1}
\end{figure}

The Allegro Hand is used to implement the controllers and is depicted in Figure \ref{fig.AllegroHandsetup}. The Allegro Hand is a fully-actuated hand with 16 degrees of freedom, and with encoders at each joint to provide measurements of $\myvar{q}$. The maximum torque output of each motor is $0.65$ Nm. Tactile sensors are emulated to provide approximate contact measurements of $\myvar{\xi}_{cf}$ to the proposed controller. Again, the implementation of the proposed control is as a tactile-based blind grasping scheme in which only measurements of $\myvar{q}$, $\myvar{\xi}_{cf}$ are available. Thus robustness in this context refers to uncertainty in the object model and state. The experimental setup includes a NI USB-8473s High-Speed CAN with a fixed sampling time of $T_s = 0.003 s$. The model parameters used for the Allegro Hand are shown in Table \ref{table:allegro hand parameters}.

The nominal object used to implement the proposed control is a cube of side lengths $0.04$ m and mass of $\apscalar{m}_o = 0.05$ kg. The true object used in the experiments is a sphere of radius $0.0375$ m and mass of $0.09$ kg. Note the difference in mass and shape also correspond to discrepancies in the object inertia.

The nominal control gains used in the experiment are  $K_p$ = diag(500, 500, 500, 0.8, 0.8, 0.8), $K_i = $ diag(50, 50, 50, 0.6, 0.6, 0.6),  $K_d =$ diag(0.008, 0.008, 0.008, 0.16, 0.16, 0.16), and $k_f = 60$. The set-point object reference command is $\myvar{r} = \myvar{x}(0) +  (0,0,0,0,0,r_\psi)$, where $r_\psi \in \mathbb{R}$ and the same virtual frame from \cite{ShawCortez2018b} is used to define $\myvar{x}$. The proposed constraint satisfying control \eqref{eq:safe control qp sampled data} is implemented with the following robustness margins: $\apscalar{\nu}_h = 0.01$, $\varepsilon = 0.15$, $\delta_{r_j} = 0.10$ rad, $\beta_{r_j} = 0.10$ rad/s, $\delta_{q_j} = 0.05$ rad, $\beta_{q_j} = 0.10$ rad/s, $j \in [1,l]$. The extended class-$\mathcal{K}$ functions used were $\alpha_1(h) = 3.3 h$ and $\alpha_2(h) = 10 h^3$. A four-sided pyramid was used to approximate the friction cone with associated friction coefficient of $\apscalar{\mu} = 1.06$. The same contact parameterizations from Section \ref{ssec:sim results} were used here for the hemispherical fingertips. Note, in the following figures, the gray regions depict the area outside of the constraint admissible set.

Figure \ref{fig:demo 1} shows the results of the first demonstration in which the proposed control \eqref{eq:safe control qp sampled data} is implemented with the nominal control for the feasible reference, $r_\psi = 0.3\pm 0.06$ rad. Figure \ref{fig:demo 1_orientation} shows the $\psi$ component of the state $\myvar{x}$ reach within the reference tolerance for a successful manipulation, and the final configuration is depicted in Figure \ref{fig:demo 1 final config}. The proposed control torque (see Figure \ref{fig:demo 1_consat torque} closely matches the nominal control torque (see Figure \ref{fig:demo 1_nom torque}). This demonstration illustrates how the proposed control admits the nominal controller to achieve the desired manipulation motion with limited interference, and successful manipulation. The following demonstration investigates the use of the nominal control alone to reach a compromising reference command.

\begin{figure}[t]
\centering
	\subcaptionbox{Initial configuration \label{fig:demo 2 initial config} }
		{\includegraphics[scale=.242]{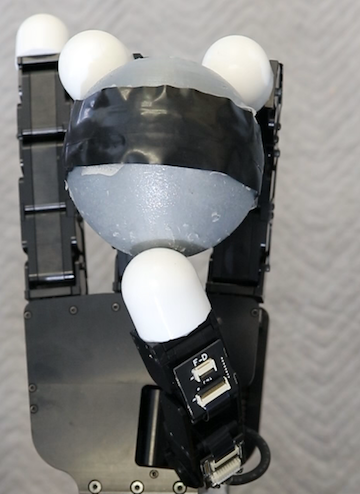}}
	\subcaptionbox{Unstable configuration \label{fig:demo 2 final config} }
		{\includegraphics[scale=.2645]{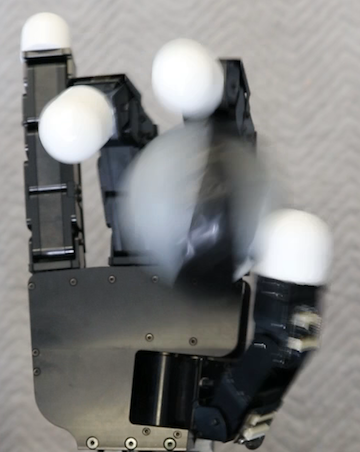}}
	\subcaptionbox{Orientation tracking.  \label{fig:demo 2_orientation} }
		{\includegraphics[scale=.221]{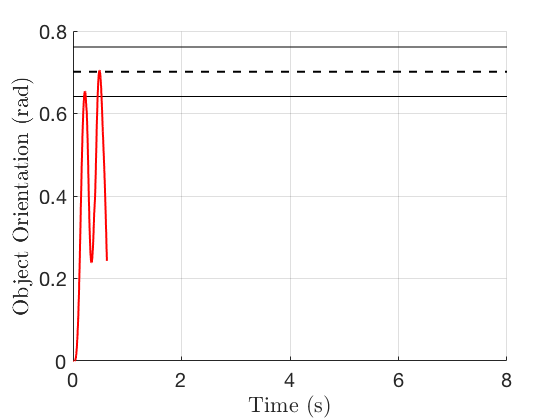}}
	\subcaptionbox{Nominal control torque \label{fig:demo 2_nom torque} }
		{\includegraphics[scale=.221]{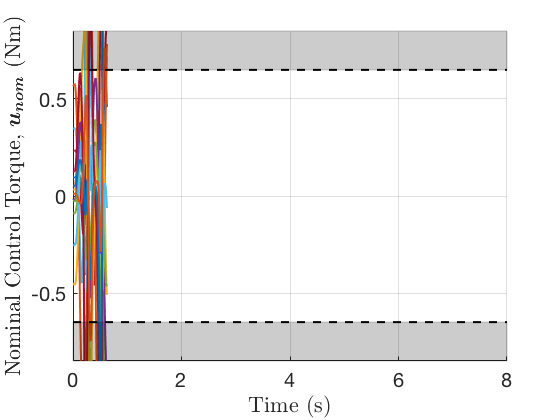}}
	\caption{Demonstration 2: Nominal control \textit{only} with compromising reference $r_\psi = 0.7$ rad.}  \label{fig:demo 2}
\end{figure}

In the second demonstration, shown in Figure \ref{fig:demo 2}, the nominal control \textit{alone} is implemented for the compromising reference $r_\psi = 0.7 \pm 0.06$ rad. The plots clearly depict the unstable behavior of the system as the nominal control attempts to reach the reference. Figure \ref{fig:demo 2 final config} shows the unstable configuration of the hand that results in loss of contact and grasp failure. Note, Figure \ref{fig:demo 2_nom torque} shows the nominal control exceeding the actuation capabilities of the hand. This demonstration shows that when no proposed control is implemented, the nominal control is subject to instabilities and ultimately grasp failure for a compromising reference. The final demonstration will investigate how the proposed control compensates for this compromising reference.

\begin{figure}[ht]
\centering
	\subcaptionbox{Initial configuration \label{fig:demo 3 initial config} }
		{\includegraphics[scale=.2997]{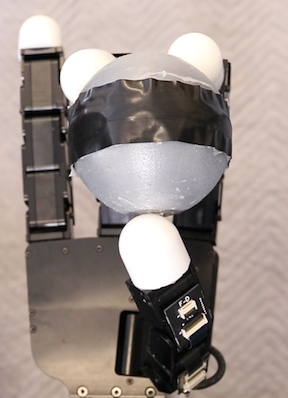}}
	\subcaptionbox{ Final configuration \label{fig:demo 3 final config} }
		{\includegraphics[scale=.25]{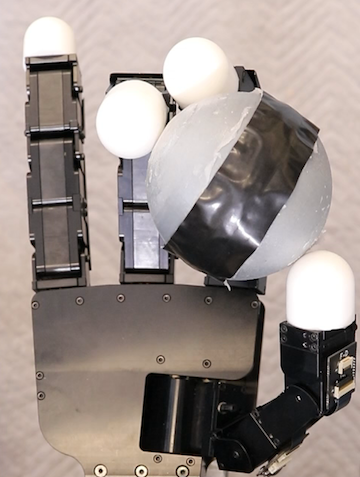}}
	\subcaptionbox{Orientation tracking.  \label{fig:demo 3_orientation} }
		{\includegraphics[scale=.221]{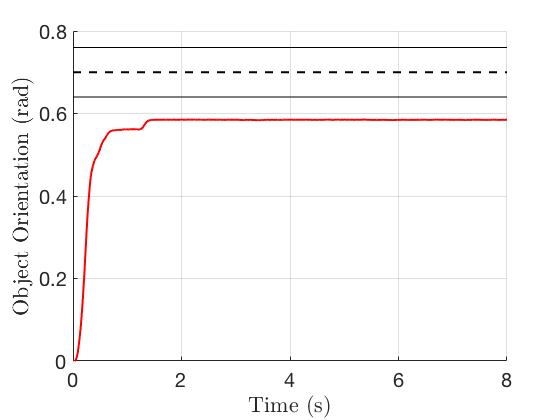}}
		\subcaptionbox{Contact location trajectories \label{fig:demo 3_acfi} }
		{\includegraphics[scale=.221]{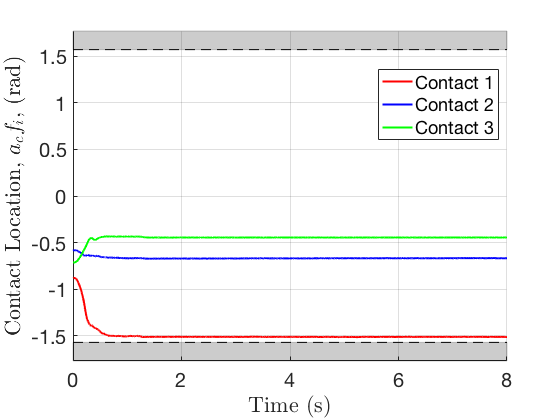}}
	\subcaptionbox{Nominal control torque \label{fig:demo 3_nom torque} }
		{\includegraphics[scale=.221]{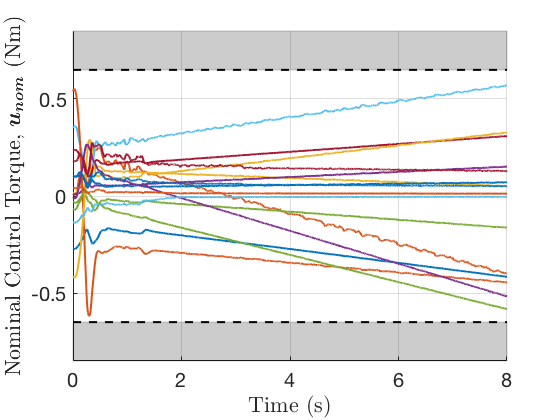}}
	\subcaptionbox{Proposed control torque \label{fig:demo 3_consat torque} }
		{\includegraphics[scale=.221]{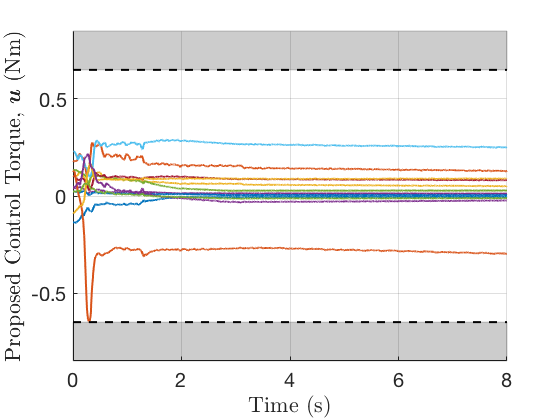}}
	
	\caption{Demonstration 3: Proposed control with compromising reference $r_\psi = 0.7$ rad.}  \label{fig:demo 3}
\end{figure}

Figure \ref{fig:demo 3} shows the results of the final demonstration in which the proposed control \eqref{eq:safe control qp sampled data} is implemented for the \textit{same} compromising reference of $r_\psi = 0.7 \pm 0.06$ rad. Figure \ref{fig:demo 3_orientation} shows the $\psi$ component of $\myvar{x}$ reaching a steady-state value outside of the reference tolerance, with the final configuration shown in Figure \ref{fig:demo 3 final config}. The reason for this steady-state offset is that the proposed control prioritizes constraint violation over implementation of the nominal control. Figures \ref{fig:demo 3_nom torque} and \ref{fig:demo 3_consat torque} show the deviation between nominal and proposed control torque as the proposed control intervenes to ensure constraint satisfaction. Figure \ref{fig:demo 3_acfi} shows the trajectory of $\myvar{a}_{cf_i}$, an element of $\myvar{\xi}_{cf}$, as $\myvar{a}_{cf_1}$ approaches the constraint boundary of $\ubscr{C}_r$. The plots show that the proposed control prevents $\myvar{\xi}_{cf}$ from exceeding the constraint set $\ubscr{C}_r$ to enforce grasp constraint satisfaction in the presence of model uncertainty and sampling time effects.

\section{Conclusion} \label{sec: conclusion}

In this paper, a novel control barrier function formulation was developed to ensure constraint satisfaction for mechanical systems. The proposed method is robust to external perturbations and sampling time effects, and provides a systematic method of bounding the system's velocity near the constraint boundary. The proposed method was then applied to the challenging problem of robotic grasping. A novel controller was proposed to ensure no slip, no over-extension (e.g. singular configurations), and no excessive rolling occurs while admitting an existing controller from the literature. The proposed control was implemented in simulation and hardware to demonstrate the efficacy of the proposed approach.

\section{Appendix}

The joint constraint-related terms $A_q$ and $b_q$ are:
\begin{equation}
A_q = \bracketmat{c}{I_{m\times m} \\ -I_{m\times m}}  \apmat{M}_h^{-1}( I_{m\times m} - \apmat{J}_h^T\apmat{B}_{ho}^{-1}\apmat{J}_h \apmat{M}_h^{-1} )
\end{equation}
\begin{equation}
\myvar{b}_q = \bracketmat{cccccc}{\myvar{b}_{q\text{min}_1} & \hdots & \myvar{b}_{q\text{min}_m} & \myvar{b}_{q\text{max}_1} & \hdots & \myvar{b}_{q\text{max}_m}}^T
\end{equation}
where
\begin{multline}
\myvar{b}_{q\text{min}_j} = - \myvar{e}_j \apmat{M}_h^{-1} \bigg( -\apmat{C}_h \myvardot{q} - \apmat{J}_h^T \apmat{B}_{ho}^{-1}  \Big( \apmat{J}_h \apmat{M}_h^{-1} ( -\apmat{C}_h \myvardot{q} + \apvar{\tau}_e) \\
+ \apmatdot{J}_h \myvardot{q} -\apmatdot{G}^T \apvardot{x}_o + \apmat{G}^T \apmat{M}_o^{-1}( \apmat{C}_o \apvardot{x}_o  - \apvar{w}_e) \Big) + \apvar{\tau}_e \bigg) \\
- \frac{\partial \alpha_1}{\partial \hat{h}_{\text{qmin}_j}} \hdot{h}_{q\text{min}_j} - \alpha_2( \hat{B}_{q\text{min}_j}) 
\end{multline}
\begin{multline}
\myvar{b}_{q\text{max}_j} =  \myvar{e}_j \apmat{M}_h^{-1} \bigg( -\apmat{C}_h \myvardot{q} - \apmat{J}_h^T \apmat{B}_{ho}^{-1}  \Big( \apmat{J}_h \apmat{M}_h^{-1} ( -\apmat{C}_h \myvardot{q} + \apvar{\tau}_e) \\
+ \apmatdot{J}_h \myvardot{q} -\apmatdot{G}^T \apvardot{x}_o + \apmat{G}^T \apmat{M}_o^{-1}( \apmat{C}_o \apvardot{x}_o  - \apvar{w}_e) \Big) + \apvar{\tau}_e \bigg) \\
- \frac{\partial \alpha_1}{\partial \hat{h}_{\text{qmax}_j}} \hdot{h}_{q\text{max}_j} - \alpha_2( \hat{B}_{q\text{max}_j}) 
\end{multline}

The contact location constraint-related terms $A_r$ and $b_r$ are:
\begin{equation}
A_r = \bracketmat{ccc}{A_{r_1}^T & \hdots & A_{r_n}^T}^T
\end{equation}
where
\begin{multline}
A_{r_i} = \bracketmat{cc}{1 & 0 \\ -1 & 0 \\ 0 & 1 \\ 0 & -1} \apmat{H} R_{c_i p} \bracketmat{cc}{0_{3\times 3} & I_{3\times 3}} \Big( \apmat{J}_{s_i} E_i \\
- (\apmat{J}_{s_i}E_i \apmat{M}_h^{-1} \apmat{J}_h^T + \apmat{M}_o^{-1} \apmat{G}) \apmat{B}_{ho}^{-1} \apmat{J}_h \Big) \apmat{M}_h^{-1} 
\end{multline}

\begin{equation}
\myvar{b}_r = \bracketmat{ccc}{\myvar{b}_{r_1} & \hdots & \myvar{b}_{r_n}}^T
\end{equation}

\begin{multline}
\myvar{b}_{r_i} = - \bracketmat{cc}{1 & 0 \\ -1 & 0 \\ 0 & 1 \\ 0 & -1} \Bigg(  \Big( \apmatdot{H} R_{c_i p} + \apmat{H} \dot{R}_{c_i p} \Big) (\apvar{\omega}_{f_i} - \apvar{\omega}_o)\\
+ \apmat{H} R_{c_i p}\bracketmat{cc}{0_{3\times 3} & I_{3\times 3}}  \Bigg( \apmatdot{J}_{s_i} \myvardot{q}_i + \apmat{J}_{s_i} E_i \apmat{M}_h^{-1} \bigg( -\apmat{C}_h \myvardot{q}\\
 - \apmat{J}_h^T \apmat{B}_{ho}^{-1}  \Big( \apmat{J}_h \apmat{M}_h^{-1} ( -\apmat{C}_h \myvardot{q} + \apvar{\tau}_e)+ \apmatdot{J}_h \myvardot{q} - \apmatdot{G}^T \apvardot{x}_o \\
 + \apmat{G}^T \apmat{M}_o^{-1}( \apmat{C}_o \apvardot{x}_o - \apvar{w}_e) \Big) + \apvar{\tau}_e  \bigg) - \apmat{M}_o^{-1} \bigg( -\apmat{C}_o \apvardot{x}_o \\
 + \apmat{G} \apmat{B}_{ho}^{-1}  \Big( \apmat{J}_h \apmat{M}_h^{-1} ( -\apmat{C}_h \myvardot{q}  + \apvar{\tau}_e) + \apmatdot{J}_h \myvardot{q}   -\apmatdot{G}^T  \apvardot{x}_o \\
 + \apmat{G}^T \apmat{M}_o^{-1}( \apmat{C}_o \apvardot{x}_o - \apvar{w}_e) \Big)+ \apvar{w}_e \bigg)    \Bigg) \Bigg) \\
 - \bracketmat{c}{ \frac{\partial \alpha_1}{\partial \hat{h}_{r_1}} \hdot{h}_{r_1} + \alpha_2(\hat{B}_{r_1}) \\ \frac{\partial \alpha_1}{\partial \hat{h}_{r_2}} \hdot{h}_{r_2} +  \alpha_2(\hat{B}_{r_2}) \\ \frac{\partial \alpha_1}{\partial \hat{h}_{r_3}} \hdot{h}_{r_3} + \alpha_2(\hat{B}_{r_3}) \\\frac{\partial \alpha_1}{\partial \hat{h}_{r_4}} \hdot{h}_{r_4} + \alpha_2(\hat{B}_{r_4})  } 
\end{multline}

\bibliographystyle{IEEEtran}
\bibliography{IEEEabrv,ShawCortez_Journal_consat}

\end{document}